\newcommand{\cmark}{\ding{51}}%
\newtheorem{theorem}{Theorem}
\newtheorem{lemma}{Lemma}
\newtheorem{remark}{Remark}
\theoremstyle{definition}
\newcommand{\defeq}{\vcentcolon=}
\newcommand{\R}{\mathbb{R}}
\newcommand{\x}{{\boldsymbol{x}}}
\newcommand{\w}{{\boldsymbol{w}}}
\newcommand{\z}{{\boldsymbol{z}}}
\newcommand{\indicator}{\mathds{1}}
\renewcommand{\hat}{\widehat}
\newcommand{\SD}{\mathrm{pair}}
\newcommand{\PN}{\mathrm{point}}
\newcommand{\CE}{\mathrm{clus}}
\newcommand{\Er}{{\mathrm{Er}}}
\DeclareMathOperator*{\E}{\mathds{E}}
\DeclareMathOperator*{\argmin}{arg\,min}
\DeclareMathOperator*{\argmax}{arg\,max}
\DeclareMathOperator{\sign}{sign}
\begin{document}

\runningauthor{Bao, Shimada, Xu, Sato, Sugiyama}

\twocolumn[
\aistatstitle{Pairwise Supervision Can Provably Elicit a Decision Boundary}
\aistatsauthor{%
  Han Bao$^{1,2,*}$ \quad
  Takuya Shimada$^{1,2,*,\dagger}$ \quad
  Liyuan Xu$^{3}$\quad
  Issei Sato$^{1}$\quad
  Masashi Sugiyama$^{2,1}$ \\
  ${}^1$The University of Tokyo, Japan \\
  ${}^2$RIKEN AIP, Japan \\
  ${}^3$Gatsby Unit, UCL, UK
}
\aistatsaddress{
  \footnotesize ${}^*$ Equal contribution (correspondence to Han Bao: \texttt{\href{mailto:tsutsumi@ms.k.u-tokyo.ac.jp}{tsutsumi@ms.k.u-tokyo.ac.jp}}) \\
  \footnotesize ${}^\dagger$ The author is now with Preferred Networks, Inc., Japan.}
]

\setcounter{footnote}{0}

\begin{abstract}
Similarity learning is a general problem to elicit useful representations
by predicting the relationship between a pair of patterns.
This problem is related to various important preprocessing tasks
such as metric learning, kernel learning, and contrastive learning.
A classifier built upon the representations is expected to perform well in downstream classification; however, little theory has been given in literature so far and thereby the relationship between similarity and classification has remained elusive.
Therefore, we tackle a fundamental question:
can similarity information provably leads a model to perform well in downstream classification?
In this paper,
we reveal that a product-type formulation of similarity learning is strongly related to an objective of binary classification.
We further show that these two different problems are explicitly connected by an excess risk bound.
Consequently, our results elucidate that similarity learning is capable of solving binary classification by directly eliciting a decision boundary.
\end{abstract}
\section{Introduction}
\label{sec:introduction}

Similarity learning is a learning paradigm~\citep{kulis2013metric} that builds a pairwise model to predict whether given paired patterns are similar or dissimilar in the classes that they belong to.
We call such a pair of patterns \emph{pairwise supervision},
in contrast to ordinary \emph{pointwise supervision} which binds a class label to a single input pattern.
Pairwise supervision is commonly available in many domains
such as geographical analysis~\citep{wagstaff2001constrained},
chemical experiment~\citep{eisenberg2000protein},
click-through feedback~\citep{davis2007information},
computer vision~\citep{yan2006discriminative,wang2015unsupervised},
natural language processing~\citep{mikolov2013distributed},
and crowdsourcing~\citep{gomes2012crowdclustering}.
Notably, feature representations can be constructed from pairwise supervision when it is not straightforward to define meaningful features~\citep{chen2009similarity,wang2009theory,kar2011similarity}.
This is one of the reasons why similarity learning has been studied extensively---including
metric learning~\citep{xing2003distance,bilenko2004integrating,davis2007information,weinberger2009distance,bellet2012similarity,niu2014information},
kernel learning~\citep{cristianini2002kernel,bach2004multiple,lanckriet2004learning,li2009constrained,cortes2010two},
and $(\varepsilon,\gamma,\tau)$-good similarity~\citep{balcan2008theory,wang2009theory,kar2011similarity,bellet2012similarity}, with different notion of similarity and models.
In recent studies, a similarity model is trained that aligns with pairwise supervision to capture inherent structures of data~\citep{bellet2012similarity,mikolov2013distributed,niu2014information,logeswaran2018efficient,saunshi2019theoretical}.
The learned similarity model is expected to help downstream tasks.
Correspondingly, it has been widely used for various downstream tasks such as classification~\citep{cristianini2002kernel,balcan2008theory,hsu2018multi,saunshi2019theoretical,nozawa2019pac},
clustering~\citep{bromley1994signature,xing2003distance,davis2007information,weinberger2009distance},
model selection~\citep{lanckriet2004learning},
and one-shot learning~\citep{koch2015siamese}.

The early theoretical research provided error bounds on classification based on similarity-based features by assuming that a given similarity metric is $(\varepsilon,\gamma,\tau)$-good~\citep{balcan2008theory,wang2009theory} (see related work for the details).
Recently, it has been attempted to investigate the relationship between learned similarity models and downstream classification, in order to deal with more flexible data structures.
\citet{bellet2012similarity} proved that features based on a learned metric are linearly separable under the framework of $(\varepsilon,\gamma,\tau)$-good similarity.
\citet{saunshi2019theoretical} analyzed how features learned in contrastive learning are meaningful in downstream classification.
These results boil down to two-step learners, which first solve similarity learning then train classifiers.
However, the latter step often requires as many samples as the former step
because the feature space constructed from the similarity function often becomes high-dimensional (see \citet[Theorem~1]{bellet2012similarity} for details).

In this work, we pose a question on what formulation of similarity learning is directly connected to downstream classification and reveal that similarity learning with a model $f(\x) \cdot f(\x')$ has a monotonic relationship to binary classification with a classifier $f(\x)$.
This interrelation provides a new insight that \emph{a binary decision boundary can essentially be obtained with only pairwise supervision} up to label permutation.
The post-process determining correct class assignments once classes are separated becomes less label-demanding than the previous formulations~\citep{bellet2012similarity,saunshi2019theoretical}.
While it is rather straightforward to use pointwise supervision to determine correct class assignments, we further found that pairwise supervision is sufficient for this purpose given that we know the majority class.
Our results are notable in that:
(i)~we unravel that similarity learning enables us to implicitly elicit a binary decision boundary without any explicit training of classifiers,
and (ii)~the post-process is less costly in terms of pointwise supervision.
Specifically, we will see:
similarity learning is tied to the binary classification error up to label permutation (Section~\ref{sec:clustering_error}).
The post-process to determine correct class assignments is discussed (Section~\ref{sec:class_assignment}).
As a by-product, we come across a training method of binary classifiers with only pairwise supervision (Section~\ref{sec:method}).
A finite-sample excess risk bound is established to connect similarity learning to binary classification (Section~\ref{sec:analysis}).
This theoretical finding is numerically demonstrated (Section~\ref{sec:experiments}).

\begin{remark}[Multi-class case]
    Despite that our main result (Theorem~\ref{theo:relationship}) is limited to the binary case,
    we can apply our training method (described in Section~\ref{sec:method}) in the multi-class case by the one-vs-rest approach:
    given $C$ classes, our training method can provide an one-vs-rest classifier for class $i \in [C]$ from pairwise supervision treating $[C] \setminus \{i\}$ as a single class.
    \citet{kar2011similarity} took the same one-vs-rest approach to first automatically construct feature representations from similarity information, and then train the one-vs-rest multi-class classifier with pointwise labels.
    This approach is valuable in the domains where samples are not immediately accessible in a Euclidean space yet sophisticated distance metrics have been developed, such as graphs, sequences, and logics.
    See \citet{ontanon2020overview} and references therein for many examples.
    Nevertheless, we do not have any theoretical grounding of this approach so far.
\end{remark}

\paragraph{Related work.}
We review several variants of similarity models used in existing literature.

\emph{(A) Reliable similarity.}
This line of work regards two data as similar if the associated labels are the same.
Our study and \citet{bellet2012similarity} belong to this category.
\citet{zhang2007value} proposed a method to decompose a model predicting pairwise labels into pointwise classifiers and analyzed the consistency of the model parameters.
\citet{hsu2018multi} have recently extended to the multi-class setup without theoretical justification yet.
In parallel, other research solved classification with pairwise supervision
by minimizing unbiased classification risk estimators~\citep{bao2018classification,shimada2019classification,cui2020classification}.
Their approaches are blessed with generalization error bounds,
while their performance deteriorates when the class-prior probability is close to uniform.
Note that even the reliable similarity can handle mild noise in pairwise supervision (Remark~\ref{remark:hard_similarity}).
Recently, \citet{tosh2021contrastive} revealed that pairwise supervision is sufficient to recover topic distributions under certain topic modeling assumptions.

\emph{(B) Noisy similarity.}
In this category, it is assumed that pairwise supervision aligns to the classes potentially with explicit noise.
For example, negative samples in contrastive learning are usually drawn from the marginal distribution, hence they could be false negatives.
Recently, \citet{chuang2020debiased} used techniques of unbiased risk estimators to improve the quality of negatives.
Further, contrastive learning often assumes that similar pairs share the same latent category, which can be different from downstream supervised classes.
Since contrastive learning is usually unsupervised, the supervised classes could be a subset or coarse-grained set of latent categories~\citep{saunshi2019theoretical}.
Other research modeled annotation errors in pairwise supervision~\citep{wu2020multi,dan2020learning}.

\emph{(C) Relaxation of positive-definite kernels.}
\citet{balcan2008theory} introduced $(\varepsilon,\gamma,\tau)$-good similarity to relax positive-definiteness of kernel functions, which supposes that a good similarity function is useful for downstream linear classification.
Much research in this framework has been interested in classification given features based on this weak similarity and derived classification error bounds~\citep{balcan2008theory,wang2009theory,kar2011similarity}.
Note that \citet{bellet2012similarity} assume reliable similarity as supervision and trains a similarity model while they train the model based on $(\varepsilon,\gamma,\tau)$-good similarity.
\section{Problem setup}
\label{sec:setup}
Let $\mathcal{X} \subseteq \R^{d}$ be a $d$-dimensional pattern space,
$\mathcal{Y}=\{\pm 1\}$ be the label space,
and $p(\x,y)$ be the density of an underlying distribution over $\mathcal{X} \times \mathcal{Y}$.
Denote the positive (negative, resp.) class prior by $\pi_+ \defeq p(y=+1)$ ($\pi_- \defeq p(y=-1)$, resp.).
Let $\sign(\alpha) = 1$ for $\alpha > 0$ and $-1$ otherwise.
$O_p(\cdot)$ denotes the order in probability.

\paragraph{Binary classification.}
The goal of binary classification is to classify unseen patterns into two classes.
It can be formulated as a problem to find a classifier $h:\mathcal{X} \rightarrow \mathcal{Y}$
that minimizes
\begin{equation}
\label{eq:classification_error}
\begin{split}
    & R_\PN (h) \defeq \E_{(X,Y)\sim p(\x, y)}
    \left[\indicator \{h(X) \neq Y \} \right],
\end{split}
\end{equation}
where $\indicator\{\cdot \}$ is the indicator function and
$\E_{(X,Y)\sim p(\x, y)}[\cdot]$ denotes the expectation with respect to $p(\x, y)$.
Typically,
we specify a hypothesis class $\mathcal{H}$ beforehand
and find a minimizer $h^*$ of $R_\PN$ in it:
$h^* \in \argmin_{h \in \mathcal{H}} R_\PN(h)$.
The empirical mean of $R_\PN$ is computed with finite samples.

\paragraph{Similarity learning.}
There are a variety of formulations of similarity learning such as (i) predicting whether a pair of patterns belong to the same class~\citep{zhang2007value,bellet2012similarity,hsu2018multi}, (ii) learning a metric that regards a similar pair of patterns closer~\citep{bilenko2004integrating,davis2007information,niu2014information,vogel2018probabilistic}, and (iii) learning a metric/representation that represents a similar pair more closer than background samples~\citep{wang2009theory,kar2011similarity,saunshi2019theoretical}.
Specifically,
we focus on the formulation (i) in the binary setup, which has a direct connection to classification (see Section~\ref{sec:classifier_from_pairwise}).
Hereafter, we suppose that a pair of $(\x,y)$ and $(\x',y')$ is independent of each other.
Let $\eta_{\pm 1}(\x) \defeq p(Y = \pm 1|X = \x)$.
Assume that $X=\x$ and $X'=\x'$ are observed first and pairwise supervision $T$ is drawn from
\begin{align*}
    & p(T\!=\!YY'|\x,\x') \\
    & =
    \begin{cases}
        \eta_{+1}(\x)\eta_{+1}(\x') + \eta_{-1}(\x)\eta_{-1}(\x') & \text{if $YY' = +1$,} \\
        \eta_{+1}(\x)\eta_{-1}(\x') + \eta_{-1}(\x)\eta_{+1}(\x') & \text{if $YY' = -1$.}
    \end{cases}
\end{align*}
The product $YY'$ indicates whether $Y$ and $Y'$ are the same/similar ($+1$) or not/dissimilar ($-1$).
Then, we are interested in the minimizer of the following classification error
\begin{equation}
\label{eq:similarity_learning_error}
    R_\SD (h) \defeq \hspace{-10pt} \E_{\substack{X, X' \sim p(\x) \\ T \sim p(T = YY'|\x, \x')}} \hspace{-10pt}
    \left[\indicator \{h(X) \cdot h(X') \neq T \} \right].
\end{equation}
Here, the model $h(\x) \cdot h(\x')$ is regarded as a similarity model so that we predict label agreement.
We call $R_\PN$ the \emph{pointwise classification error}
and $R_\SD$ the \emph{pairwise classification error}.
The empirical mean of $R_\SD$ is computed with a finite number of triplets $(\x, \x', yy')$.
We will discuss several benefits of the formulation \eqref{eq:similarity_learning_error} in Section~\ref{sec:relation_existing}.

\begin{remark}[Similarity as features]
    Similarity-based features are often used in domains where Euclidean features are unavailable~\citep{chen2009similarity,wang2009theory}.
    Under such a case, similarity-based features may be treated as $\x$ instead: given a number of ``landmark'' points $\{\z_1, \dots, \z_l\}$, a similarity function $K$ defines similarity-based features $[K(\x, \z_1), \dots, K(\x, \z_l)]^\top$ for an input $\x$.
    Our formulation assumes that $\x$ is available for simplicity but can be replaced with similarity-based features.
\end{remark}

\begin{remark}[$T$ is not a hard similarity label]
    \label{remark:hard_similarity}
    Even if $Y = Y' = +1$ (similar) with high probability, we could observe $T = -1$ (dissimilar) with some probability.
    Assume $\eta_{+1}(\x), \eta_{+1}(\x') \in (\tfrac{1}{2}, 1)$.
    Then, the flipping rate $p(T=-1|\x, \x')$ lies in $(0, \frac{1}{2})$.
    This means that observed pairwise supervision could be flipped stochastically under our similarity model.
    We expect that this generality is useful to handle annotation noise in pairwise supervision.
\end{remark}
\section{Learning a binary classifier with pairwise supervision}
\label{sec:classifier_from_pairwise}

We draw a connection between the specific formulation of similarity learning \eqref{eq:similarity_learning_error} and binary classification (Theorem~\ref{theo:relationship}).
This linkage enables us to train a pointwise binary classifier with pairwise supervision (Section~\ref{sec:method}).
All proofs hereafter are deferred to Appendix~\ref{sec:proof}.

\subsection{Connection between similarity learning and classification}
\label{sec:clustering_error}
We first introduce a performance metric for binary classification called the \emph{clustering error} that quantifies the discriminative power of a classifier up to label permutation:\footnote{
    $1 - R_\CE$ is known as clustering accuracy~\citep{fahad2014survey}.
    The number of clusters is confined to two for our purpose.
}
\begin{equation}
R_\CE (h) \defeq \min \{ R_\PN(h), R_\PN(-h) \}.
\end{equation}
Here, $R_\CE$ is used as an evaluator of binary classifiers,
though usually used for the evaluation of clustering methods~\citep{fahad2014survey}.
The clustering error differs from $R_\PN$ in that it dismisses the difference between $+h$ and $-h$,
yet a binary decision boundary is still evaluated properly.
The clustering error $R_\CE$ can be tied to the pairwise classification error $R_\SD$ as follows, which is our primary result.

\begin{theorem}
\label{theo:relationship}
For any classifier $h: \mathcal{X} \rightarrow \mathcal{Y}$, $0 \le R_\SD(h) \le \frac{1}{2}$, and
\begin{equation}
\label{eq:relationship}
    R_\CE(h) = \frac{1}{2} - \frac{\sqrt{1-2R_\SD(h)}}{2}.
\end{equation}
\end{theorem}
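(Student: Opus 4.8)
The plan is to collapse both $R_\SD$ and $R_\CE$ onto a single scalar, namely the pointwise error $a \defeq R_\PN(h)$, and then extract the identity by elementary algebra. First I would record a symmetry: since $h(X), Y \in \{\pm 1\}$, the event $\{-h(X) \neq Y\}$ coincides with $\{h(X) = Y\}$, so $R_\PN(-h) = 1 - R_\PN(h)$. Hence $R_\CE(h) = \min\{a, 1-a\}$, which already forces $R_\CE(h) \in [0, \tfrac12]$.

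Next I would rewrite $R_\SD$ in terms of latent labels. The key observation is that the prescribed law of $T$ given $(\x,\x')$ is exactly the law of the product $YY'$ obtained by drawing $Y$ and $Y'$ independently from $\eta_{\pm1}(\x)$ and $\eta_{\pm1}(\x')$: the two branches in the definition of $p(T\!=\!YY'\mid\x,\x')$ are precisely $p(YY'\!=\!+1\mid\x,\x')$ and $p(YY'\!=\!-1\mid\x,\x')$ under this independence. Thus $R_\SD(h) = \E\!\left[\indicator\{h(X)h(X') \neq YY'\}\right]$, where $(X,Y)$ and $(X',Y')$ are i.i.d.\ draws from $p(\x,y)$.

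The heart of the argument is a parity identity. Writing $u \defeq \indicator\{h(X) \neq Y\}$ and $v \defeq \indicator\{h(X') \neq Y'\}$ for the two pointwise disagreement indicators, a short check over the four sign patterns shows that $h(X)h(X') = YY'$ holds exactly when $u = v$, i.e.\ when both pairs are classified consistently (both correct or both wrong). Therefore $\{h(X)h(X') \neq YY'\} = \{u \neq v\}$, and independence of the two pairs gives $R_\SD(h) = P(u \neq v) = P(u=1)P(v=0) + P(u=0)P(v=1) = 2a(1-a)$, using $P(u=1)=P(v=1)=a$.

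It remains to tie $2a(1-a)$ to $R_\CE$. Since $a(1-a)$ is symmetric about $a=\tfrac12$, it is invariant under $a \mapsto 1-a$, so $a(1-a) = m(1-m)$ with $m \defeq \min\{a,1-a\} = R_\CE(h)$; hence $R_\SD(h) = 2m(1-m)$. The bound $0 \le R_\SD(h) \le \tfrac12$ is then immediate from $m \in [0,\tfrac12]$. Finally, solving the quadratic $2m^2 - 2m + R_\SD(h) = 0$ and selecting the root with $m \le \tfrac12$ yields $m = \tfrac12 - \tfrac{\sqrt{1-2R_\SD(h)}}{2}$, as claimed. Everything is routine once the reformulation is set up; the one step deserving genuine care is justifying that replacing the drawn $T$ by the product $YY'$ of independent posterior draws leaves $R_\SD$ unchanged, together with the parity check that converts the product mismatch into the disagreement of the two pointwise error events.
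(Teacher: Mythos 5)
Your proof is correct and follows essentially the same route as the paper's: both hinge on the parity observation that $h(X)h(X') \neq YY'$ exactly when one of the two pointwise predictions errs, which together with independence yields $R_\SD(h) = 2R_\PN(h)\bigl(1-R_\PN(h)\bigr)$, and then solve the resulting quadratic. Your final step (substituting $m = \min\{a,1-a\}$ via the symmetry of $a(1-a)$ and selecting the root $m \le \tfrac12$) is a slightly cleaner way to organize the algebra than the paper's $\pm/\mp$ bookkeeping, but it is not a different argument.
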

An immediate corollary is the monotonic relationship $R_\CE(h_1) < R_\CE(h_2) \iff R_\SD(h_1) < R_\SD(h_2)$ for any $h_1$ and $h_2$.
Hence, the minimization of $R_\SD$ amounts to the minimization of $R_\CE$,
constituting a decision boundary.
That is, \emph{similarity learning can essentially discover a binary decision boundary}.
While similarity learning has previously been connected to downstream classification via intermediate feature spaces~\citep{bellet2012similarity,saunshi2019theoretical,nozawa2019pac},
our result is the first to explicate that similarity learning is directly related to constructing a decision boundary.

\paragraph{Surrogate risk minimization.}
Here,
we discuss surrogate losses for similarity learning.
We define a hypothesis class by
$\mathcal{H} = \left\{ \sign \circ f \mid f \in \mathcal{F} \right\}$,
where $\mathcal{F} \subseteq \R^\mathcal{X}$ is a specified class of prediction functions
and $\sign \circ f(\cdot) \defeq \sign(f(\cdot))$.
Theorem~\ref{theo:relationship} suggests that we may minimize $R_\CE$ by minimizing $R_\SD$ instead.
As in the standard binary classification case,
the indicator function appearing in $R_\SD$ is replaced with a surrogate loss $\ell: \R \times \mathcal{Y} \to \R_{\geq 0}$
since it is intractable to minimize a discrete objective~\citep{bartlett2006convexity}.
Eventually,
the pairwise surrogate risk
\begin{equation}
\label{eq:surrogated_similarity_risk}
    R_{\SD}^\ell(f)
    \defeq \E_{X, X', T} \left[ \ell(f(X)f(X'), T) \right]
\end{equation}
is minimized.
If $\ell$ is \emph{classification-calibrated}~\citep{bartlett2006convexity}, the minimization of $R_\SD^\ell$ is expected to lead to minimizing $R_\SD$ as well.\footnote{
    If a surrogate loss $\ell$ is classification-calibrated, the minimization of the surrogate classification risk leads to minimizing the target classification error $R_\PN$.
    The precise definition can be found in \citet{bartlett2006convexity}.
    Typical loss functions such as the logistic and hinge losses are classification-calibrated.
}
This will be justified by Lemma~\ref{lemma:surrogate_calibration} in Section~\ref{sec:analysis}.

As we will discuss in Section~\ref{sec:relation_existing}, the formulation \eqref{eq:surrogated_similarity_risk} can be related to several existing formulations in similarity learning in terms of the surrogate loss.

\subsection{Determination of correct sign of classifiers}
\label{sec:class_assignment}
In Section~\ref{sec:clustering_error}, we observed that similarity learning can draw a decision boundary up to label permutation.
For a given hypothesis $h$,
we are now interested in its sign, i.e., $+h$ or $-h$, leading to a smaller pointwise classification error.
We refer to this step as class assignment.
The optimal class assignment is denoted by
$s^* \defeq \argmin_{s \in \{\pm 1\}} R_\PN(s \cdot h)$.
We can consider two scenarios.
Under both, class assignment is much cheaper in supervision than training the post-hoc linear separators.

\paragraph{Class assignment with pointwise supervision.}
If pointwise supervision is available,
we can determine the class assignment by minimizing the pointwise classification error $R_\PN$ computed with the additional data.
This procedure admits the exponentially small sample complexity~\citep{zhang2007value}.

\paragraph{Class assignment without pointwise supervision.}
Here, we further ask
if it is possible to obtain the correct class assignment \emph{without} any class labels.
Surprisingly, we find that this is possible if the positive and negative proportions are not equal and we know \emph{which class is the majority}.
Based on the equivalent expression of $R_\PN$~\citep{shimada2019classification},
this finding is formally stated in the following theorem.
\begin{theorem}
\label{theo:class_assignment}
Assume that the class prior $\pi_+ \ne \tfrac{1}{2}$.
Then, the optimal class assignment $s^*$ can be represented as
$s^* = \sign (2 \pi_+ - 1) \cdot \sign (1 - 2 Q(h))$,
where
\begin{equation*}
    Q(h) \defeq \E_{X, X', T} \left[ \frac{\indicator\left\{h(X) \neq T \right\} + \indicator\left\{h(X') \neq T \right\}}{2} \right].
\end{equation*}
\end{theorem}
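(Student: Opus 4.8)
The plan is to reduce the comparison between $R_\PN(h)$ and $R_\PN(-h)$ to the sign of a single scalar, and then to express that scalar through $Q(h)$ and $\pi_+$. First I would note the complementarity $R_\PN(-h) = 1 - R_\PN(h)$, which holds because the event $\{-h(X) \neq Y\}$ is exactly the complement of $\{h(X) \neq Y\}$ when $h(X), Y \in \{\pm 1\}$. Hence $R_\PN(h) \le R_\PN(-h)$ if and only if $R_\PN(h) \le \tfrac{1}{2}$, so the optimal assignment is $s^* = \sign(1 - 2R_\PN(h))$, with the paper's convention $\sign(0) = -1$ harmlessly breaking the tie when $R_\PN(h) = \tfrac{1}{2}$ (both assignments are then optimal). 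The target is therefore the identity $\sign(1 - 2Q(h)) = \sign(2\pi_+ - 1)\,\sign(1 - 2R_\PN(h))$.

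Next I would rewrite $Q(h)$ in terms of the underlying labels. Recall from the problem setup that $T$ has the same law as $YY'$, where $Y \mid X$ and $Y' \mid X'$ are drawn conditionally independently from $\eta(\cdot)$; thus $Q(h)$ can be evaluated over $(X, Y, X', Y')$ with $T = YY'$. By the $X \leftrightarrow X'$ symmetry of the joint distribution the two indicator terms contribute equally, so $Q(h) = \E[\indicator\{h(X) \neq YY'\}]$. Applying the identity $\indicator\{a \neq b\} = \tfrac{1}{2}(1 - ab)$ for $a, b \in \{\pm 1\}$ gives $Q(h) = \tfrac{1}{2} - \tfrac{1}{2}\E[h(X)\,Y\,Y']$, equivalently $1 - 2Q(h) = \E[h(X)\,Y\,Y']$.

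The crux is then a factorization: since the pairs $(X, Y)$ and $(X', Y')$ are independent, $\E[h(X)YY'] = \E[h(X)Y]\cdot\E[Y']$. Here $\E[Y'] = \pi_+ - \pi_- = 2\pi_+ - 1$, while $\E[h(X)Y] = P(h(X) = Y) - P(h(X) \neq Y) = 1 - 2R_\PN(h)$. Combining yields the clean identity $1 - 2Q(h) = (1 - 2R_\PN(h))(2\pi_+ - 1)$, which is the promised equivalent expression of the pointwise risk through pairwise quantities. Taking signs of both sides, and using $\pi_+ \neq \tfrac{1}{2}$ so that $\sign(2\pi_+ - 1) \in \{\pm 1\}$, splits the right-hand side multiplicatively; multiplying through by $\sign(2\pi_+ - 1)$ recovers $s^* = \sign(2\pi_+ - 1)\,\sign(1 - 2Q(h))$.

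The computation is essentially mechanical, so I do not expect a serious obstacle; the one place demanding care is justifying the factorization $\E[h(X)YY'] = \E[h(X)Y]\,\E[Y']$, which hinges on reading the generative model correctly so that $T \stackrel{d}{=} YY'$ and $(X, Y)$ is independent of $(X', Y')$. I would also verify the sign conventions at the boundary $R_\PN(h) = \tfrac{1}{2}$ (equivalently $Q(h) = \tfrac{1}{2}$), where both assignments are simultaneously optimal and the convention $\sign(0) = -1$ keeps the identity self-consistent.
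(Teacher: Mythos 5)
Your proof is correct, and it differs from the paper's in one substantive way: how the key identity $1 - 2Q(h) = (2\pi_+ - 1)\bigl(1 - 2R_\PN(h)\bigr)$ is obtained. The paper writes $s^* = \sign\bigl(R_\PN(-h) - R_\PN(h)\bigr)$ and then invokes, as a black box, the equivalent risk expression of Shimada et al.\ (2019) (restated as Lemma~\ref{lemma:sd_risk} in Appendix~\ref{sec:proof}), i.e.\ $Q(h) = (2\pi_+ - 1) R_\PN(h) + 1 - \pi_+$; subtracting the two instances of that formula for $h$ and $-h$ gives $R_\PN(-h) - R_\PN(h) = \frac{1 - 2Q(h)}{2\pi_+ - 1}$, and taking signs finishes. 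You instead derive the same identity from first principles: exchangeability of the two i.i.d.\ pairs collapses $Q(h)$ to $\E\bigl[\indicator\{h(X) \neq YY'\}\bigr]$, the identity $\indicator\{a \neq b\} = \tfrac{1}{2}(1 - ab)$ for $a, b \in \{\pm 1\}$ turns this into $1 - 2Q(h) = \E[h(X) Y Y']$, and independence of $(X,Y)$ and $(X',Y')$ factorizes it as $\E[h(X)Y] \cdot \E[Y'] = \bigl(1 - 2R_\PN(h)\bigr)(2\pi_+ - 1)$; you also use the complementarity $R_\PN(-h) = 1 - R_\PN(h)$ up front to get $s^* = \sign\bigl(1 - 2R_\PN(h)\bigr)$ directly. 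What your route buys is self-containedness---it effectively re-proves the cited lemma in three lines via a clean sign/factorization argument---while the paper's route buys brevity and an explicit link to the unbiased-risk-estimator literature it builds on. Both treatments agree on the degenerate case $Q(h) = \tfrac{1}{2}$ (equivalently $R_\PN(h) = \tfrac{1}{2}$), where both assignments are optimal and the convention $\sign(0) = -1$ keeps the stated formula consistent, exactly as the paper remarks at the end of its proof.
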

We approximate $Q$ with a finite number of pairs.
As we will see in Lemma~\ref{lemma:est_class_assingment} in Section~\ref{sec:analysis},
the class assignment error is exponentially small in the number of pairs.

\begin{remark}[Necessity of $Q(h)$]
If we know which class is the majority,
class assignment may look possible at a glance
by simply looking at the average of $h(\x)$ with unlabeled validation data,
instead of Theorem~\ref{theo:class_assignment}.
Unfortunately, this does not always succeed even asymptotically
(discussed in Appendix~\ref{subsec:class-assignment}).
\end{remark}

\subsection{Learning a binary classifier with only pairwise supervision is possible}
\label{sec:method}
As a by-product of Theorems~\ref{theo:relationship} and \ref{theo:class_assignment},
the following two-stage method can train a pointwise classifier with only pairwise supervision.
Assume that
the class prior is not $\frac{1}{2}$ and the majority class is known.
Let $\mathfrak{D}_\textrm{train} \defeq \{(\x_i, \x_i', \tau_i) \}_{i=1}^{n_\SD}$ be a training set,
where $\tau_i \defeq y_i  y'_i$ and $(\x_i,y_i)$ and $(\x_i',y_i')$ are i.i.d.~samples following $p(\x,y)$.
We randomly divide $n_\SD$ pairs in $\mathfrak{D}_\mathrm{train}$ into two sets $\mathfrak{D}_1$ and $\mathfrak{D}_2$,
where $|\mathfrak{D}_1| = m_1$ and  $|\mathfrak{D}_2| = m_2$ satisfying $m_1 + m_2 = n_\SD$.\footnote{
    The independent two sets are necessary otherwise errors of Steps 1 and 2 correlate, which leads to overfitting.
    Technically, they are required because Theorem~\ref{theo:est_proposed} relies on the union bound.
}

In Step~1, we obtain a minimizer of the empirical pairwise classification risk with $\mathfrak{D}_1$:
\begin{equation}
\label{eq:f_hat}
    \hat{f} \defeq \argmin_{f \in \mathcal{F}} \hat{R}^\ell_\SD(f),
\end{equation}
where $\hat{R}^\ell_\SD$ is the sample mean of $R^\ell_\SD$ with $\mathfrak{D}_1$.
In Step~2, we assign classes with $\sign \circ \hat{f}$ and $\mathfrak{D}_2$:
\begin{equation}
\label{eq:s_hat}
    \hat{s} \defeq \sign (2 \pi_+ - 1) \cdot \sign (1 - 2 \hat{Q}(\sign \circ \hat{f})),
\end{equation}
where $\hat Q$ is the sample mean of $Q$ with $\mathfrak{D}_2$.
After all, $\hat{s} \cdot \sign \circ \hat{f}$ is a desideratum.
If class assignment is not necessary and just separating test patterns into two disjoint groups is the goal, we may simply set $m_1=n_\SD$ and omit Step~2 of finding $\widehat{s}$.

\begin{remark}[Case of $\pi = \frac{1}{2}$]
    With only pairwise supervision,
    class assignment is hopeless because both classes are essentially symmetric,
    while it is still possible to draw a decision boundary.
    Class assignment with pointwise supervision is still possible.
\end{remark}

\subsection{Benefits of our formulation over existing similarity learning}
\label{sec:relation_existing}

\begin{table*}[t]
    \centering
    \caption{
        Comparison of closely related methods to train classifiers with pairwise supervision.
        They assume the availability of reliable similarity (see Section~\ref{sec:introduction}).
        The column ``$\pi_+ = \frac{1}{2}$'' shows whether the formulation is valid under $\pi_+ = \frac{1}{2}$.
        In \emph{sample complexity}, $m$ denotes the number of paired data in Step~1,
        and either paired or pointwise data in Step~2.
        The sample complexity analysis of Step~1 is with respect to either pointwise classification or clustering error.
        To make the comparison proper, we assume that the hinge loss is used and eventually $\psi$-transform is $\psi(u) = u$.
        This is detailed in Section~\ref{sec:analysis} (Discussion).
    }
    \footnotesize
    \vspace{2pt}
    \label{tab:comparison}
    \renewcommand{\arraystretch}{1.2}
    \begin{tabular}{lllll}
        \toprule
        {} & {} & \multicolumn{2}{c}{Sample complexity of} & {} \\
        \cmidrule(lr){3-4}
        {} & \makecell{$\pi_+ = \frac{1}{2}$} & \makecell{Similarity learning (Step 1)} & \makecell{Post-process (Step 2)} & Comment \\
        \midrule
        \textbf{CIPS} (Ours) & \cmark & \makecell[l]{$O_p(m^{-\frac{1}{4}})$ \\ (Lemma~\ref{lemma:est_clustering_error_minimization} in \S\ref{sec:analysis})} & \makecell[l]{$O_p(e^{-m})$ \\ (Lemma~\ref{lemma:est_class_assingment} in \S\ref{sec:analysis})} & \makecell[l]{Step 2 is class assignment.} \\
        \midrule
        \makecell[l]{OVPC \\ \citep{zhang2007value}} & \cmark & (N/A) & $O_p(e^{-m})$ & \makecell[l]{Step 2 is class assignment. Step~1 \\ was shown to be consistent \\ but complexity is not known.} \\
        \makecell[l]{SLLC \\ \citep{bellet2012similarity}} & \cmark & $O_p(m^{-\frac{1}{4}})$ & $O_p(m^{-\frac{1}{2}})$ & \makecell[l]{Step 2 is SVM training.} \\
        \makecell[l]{MCL \\ \citep{hsu2018multi}} & \cmark & (N/A) & (N/A) & \makecell[l]{Inner product of classifiers is fitted in \\ Step 1. Sample complexities have yet \\ to be known.} \\
        \makecell[l]{SD \\ \citep{shimada2019classification}} & -- & $O_p(m^{-\frac{1}{2}})$ & (unnecessary) & Step 1 trains classifiers directly. \\
        \bottomrule
    \end{tabular}
\end{table*}

We reiterate that similarity learning in our formulation directly elicits a boundary without the post-process in contrast with \citet{bellet2012similarity}---their method needs to train a classifier built on top of the learned similarity metric in the post-process, which incurs additional sample complexity $O_p(m^{-1/2})$.
Table~\ref{tab:comparison} provides an overview of the comparison with related work.
We remark that the sample complexity of SLLC is transformed into the complexity in terms of paired data (Step~1) from the original complexity in pointwise data~\citep[Theorem~3]{bellet2012similarity}.\footnote{
    Given $m$ pointwise data, $O(m^2)$ pairs can be generated
    and thereby the sample complexity is transformed.
    Strictly speaking, the generated $O(m^2)$ points are not independent of each other.
    Nevertheless, the convergence rate would remain the same by using the error bound with interdependent data~\citep{usunier2005generalization}.
}
In addition, while our Step~1 is worse than SD, our formulation is valid even when $\pi_+ = \frac{1}{2}$ with pointwise supervision.
Subsequently, we discuss the other perspectives of our formulation.

\paragraph{Generalization in terms of surrogate losses.}
Several existing formulations can be related to our formulation \eqref{eq:surrogated_similarity_risk}.
Kernel alignment~\citep{cristianini2002kernel} learns a kernel $K$ approximating a similarity matrix $K^*$ of labels
by maximizing the cosine similarity $\frac{\langle K, K^* \rangle}{\sqrt{\|K\| \cdot \|K^*\|}}$,
where $\langle K, K^* \rangle$ is the Frobenius inner product of the Gram matrices.
If the product $f(\x) \cdot f(\x')$ is used as a kernel,
kernel alignment is equivalent (up to the normalization factor $\sqrt{\|K\| \|K^*\|}$) to minimizing Eq.~\eqref{eq:surrogated_similarity_risk} with the linear loss $\ell_\mathrm{lin}(z, t) \defeq -zt$.
On the other hand, metric learning based on ($\varepsilon$,$\gamma$,$\tau$)-good similarity~\citep{balcan2008theory} regards a similarity function inducing a good linear separator as a good similarity.
Here, the linear separability is defined via the hinge loss $\ell_\mathrm{hinge}(z, t) \defeq [1-zt]_+$.
\citet{bellet2012similarity} formulated learning a bilinear similarity $\x^\top A \x'$ by minimizing the hinge loss,
which is equivalent to the minimization of Eq.~\eqref{eq:surrogated_similarity_risk} with $\ell_\mathrm{hinge}$ and the choice $A = \w\w^\top$ such that $f(\x) = \w^\top\x$.
In other words, we posit the rank-$1$ similarity model in order to have Theorem~\ref{theo:relationship}.
In addition to these examples, the InfoNCE loss used in recent contrastive learning~\citep{oord2018representation,logeswaran2018efficient,saunshi2019theoretical} can be regarded as the (multi-sample counterpart of) logistic loss $\ell_\mathrm{log}(z,t) \defeq \log(1+e^{-zt})$.

Thanks to this generalization, subsequent analysis systematically connects these existing formulations to downstream classification under the model assumption.

\paragraph{Explicit relation to classification.}
\citet{hsu2018multi} formulated similarity learning in a slightly different way,
as maximum likelihood estimation of the pairwise label $S_\tau \defeq \frac{\tau + 1}{2}$:\footnote{The multi-class formulation in \citet{hsu2018multi} was simplified in binary classification here for comparison.}
\begin{align}
    \begin{aligned}
        \min_{f \in \mathcal{F}} \frac{1}{m_1} & \sum_{(\x,\x',\tau) \in \mathfrak{D}_1} \hspace{-10pt} -S_{\tau}\log(\tilde q(f(\x), f(\x')) \\
        & - (1-S_\tau)\log(1 - \tilde q(f(\x), f(\x'))),
    \end{aligned}
    \label{eq:lps_model}
\end{align}
where $\tilde q(z, z') \defeq \left[\begin{smallmatrix}q(z)\\1-q(z)\end{smallmatrix}\right]^\top\left[\begin{smallmatrix}q(z')\\1-q(z')\end{smallmatrix}\right]$ is the inner product of two binary probability vectors,
and $q(z) \defeq (1 + \exp(-z))^{-1}$ denotes the (inverse) logit link.
On the other hand, our formulation \eqref{eq:f_hat} with the logistic loss $\ell_\mathrm{log}(z,t) = -S_t\log(q(z)) -(1-S_t)\log(1-q(z))$ is
\begin{align}
    \begin{aligned}
        \min_{f \in \mathcal{F}} \frac{1}{m_1} & \sum_{(\x,\x',\tau) \in \mathfrak{D}_1} \hspace{-10pt} -S_\tau\log(q(f(\x) \cdot f(\x'))) \\
        & - (1-S_\tau)\log(1-q(f(\x) \cdot f(\x'))).
    \end{aligned}
    \label{eq:ips_model}
\end{align}
In the formulation \eqref{eq:lps_model}, similarity is defined by the inner product of class probabilities,
while it is defined by the inner product of $f$ in the formulation \eqref{eq:ips_model}.
The latter definition is often called the \emph{inner product similarity} (IPS) model~\citep{okuno2020hyperlink}.\footnote{
    The IPS model originally defined similarity between two vector data representations,
    hence is called \emph{inner} product similarity.
    Yet, the IPS model is applied on one-dimensional prediction $f(\x)$ in our context.
    The IPS model has been used in several domains~\citep{tang2015line,logeswaran2018efficient,saunshi2019theoretical,okuno2020hyperlink}.
}
While both are valid similarity learning methods,
the IPS model \eqref{eq:ips_model} has several benefits: one can choose arbitrary loss functions,%
\footnote{
    The formulation \eqref{eq:lps_model} can be extended from maximum likelihood estimation by using an arbitrary proper scoring rules~\citep{gneiting2007strictly}, but non-proper losses such as the hinge loss cannot be used.
}
and besides, the pairwise classification risk minimization~\eqref{eq:f_hat} admits an excess risk bound (Lemma~\ref{lemma:surrogate_calibration} in Section~\ref{sec:analysis}).
For this reason, we call our formulation \emph{CIPS (Classifier with Inner Product Similarity)} from now on.

\section{Excess risk and sample complexity analysis}
\label{sec:analysis}

In this section, we provide the missing sample complexity analyses of CIPS in Table~\ref{tab:comparison}.
In addition, the excess risk is obtained to claim that CIPS does solve binary classification.

Let $\hat{f}$ and $\hat{s}$ be the solutions of Eqs.~\eqref{eq:f_hat} and \eqref{eq:s_hat}, respectively.
The target excess risk for similarity learning is denoted by
\begin{equation*}
    \Er_\PN(\hat{s} \cdot \sign \circ \hat{f}) \defeq  R_\PN(\hat{s} \cdot \sign \circ \hat{f} ) - R_\PN^*,
\end{equation*}
where $R_\PN^* \defeq \inf\limits_{f} R_\PN(\sign \circ f)$,
and $\inf\limits_{f}$ indicates the infimum over all measurable functions.
In addition, we introduce notation for the other excess risks:
\begin{align*}
    \Er_\CE(\sign \circ f) &\defeq R_\CE(\sign \circ f) - R_\CE^*, \\
    \Er_\SD(\sign \circ f) &\defeq R_\SD(\sign \circ f) - R_\SD^*, \\
    \Er_\SD^\ell(f) &\defeq R_\SD^\ell(f) - R_\SD^{\ell,*},
\end{align*}
where $R_\CE^* \defeq \inf\limits_f R_\CE(\sign \circ f)$.
$R_\SD^*$ and $R_\SD^{\ell,*}$ are defined as the infima over all measurable functions similarly.
To derive the excess risk bound on $\Er_\PN(\hat{s} \cdot \sign \circ \hat{f})$,
we need to handle errors of clustering error minimization and class assignment independently,
which will be shown in Lemmas~\ref{lemma:est_clustering_error_minimization} and \ref{lemma:est_class_assingment}, respectively.
An important insight to combine two errors is that
if the class assignment is successful, $\Er_\PN(\hat{s} \cdot \sign \circ \hat{f})$ is equivalent to the excess risk of clustering error minimization.
That is to say,
\begin{equation}
\label{eq:successful_class_assignment}
\begin{split}
    & \hat{s} = \argmin_{s \in \{\pm 1\}}R_\PN(s \cdot \sign \circ \hat{f}) \\
    & \implies \Er_\PN(\hat{s} \cdot \sign \circ \hat{f}) = \Er_\CE(\sign \circ \hat{f}).
\end{split}
\end{equation}
In order to bound $\Er_\CE(\sign \circ \hat{f})$,
we use the Rademacher complexity~\citep{bartlett2002rademacher}
specifically defined on the class $\{ (\x, \x') \mapsto f(\x) \cdot f(\x') \mid f \in \mathcal{F} \}$
\begin{equation*}
    \mathfrak{R}_m(\mathcal{F})
    \defeq \E_{X_i, X_i'}
    \left[ \sup_{f \in \mathcal{F}} \frac{1}{m} \sum_{i=1}^{m} \sigma_i f(X_i) \cdot f(X'_i) \right],
\end{equation*}
where $\{\sigma_i\}_{i=1}^m$ are the Rademacher variables.
Before obtaining an excess risk bound of $R_\CE$, we need to bridge the excess risk $\Er_\SD$ and the surrogate $\Er_\SD^\ell$.
\begin{lemma}
\label{lemma:surrogate_calibration}
If a loss $\ell$ is classification-calibrated~\citep{bartlett2006convexity},
then there exists a convex, non-decreasing, and invertible $\psi: [0, 1] \to [0, +\infty)$
such that for any sequence $(u_i)$ in $[0, 1]$,
\begin{align*}
    \psi(u_i) \to 0 \text{~~~if and only if~~~} u_i \to 0
\end{align*}
and for any measurable function $f$ and probability distribution on $\mathcal{X} \times \mathcal{Y}$,
\begin{align*}
    \psi(\Er_\SD(\sign \circ f)) \leq \Er_\SD^\ell(f).
\end{align*}
\end{lemma}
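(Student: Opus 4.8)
The plan is to recognize the pairwise surrogate risk $R_\SD^\ell$ as an ordinary binary classification surrogate risk living on the product space $\mathcal{X}\times\mathcal{X}$, and then to import the classification-calibration machinery of \citet{bartlett2006convexity} wholesale. Concretely, regard the pair $(X,X')$ as the input, the agreement label $T\in\mathcal{Y}$ as the binary target, and the rank-one product $f(X)f(X')$ as the real-valued score. Writing $\rho(\x,\x')\defeq p(T=+1\mid\x,\x')$, the conditional surrogate risk at a pair is $C_\rho(\alpha)=\rho\,\ell(\alpha,+1)+(1-\rho)\ell(\alpha,-1)$, so that $R_\SD^\ell(f)=\E_{X,X'}[C_{\rho(X,X')}(f(X)f(X'))]$. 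The target error $R_\SD(\sign\circ f)$ is exactly the zero-one risk of the score $\sign(f(X)f(X'))$ in this induced problem, since $\sign(f(\x))\sign(f(\x'))=\sign(f(\x)f(\x'))$.

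First I would invoke the $\psi$-transform of \citet{bartlett2006convexity}: for a classification-calibrated $\ell$ there is a convex $\psi$ with $\psi(0)=0$, and their Theorem~1 gives, for every measurable pair-score $g$ and every distribution, $\psi\!\left(R^{01}(g)-R^{01,*}\right)\le R^{\ell}(g)-R^{\ell,*}$, where the stars denote the respective Bayes risks over all measurable $g$. Specializing $g(\x,\x')=f(\x)f(\x')$ yields precisely $\psi(R_\SD(\sign\circ f)-R^{01,*})\le R_\SD^\ell(f)-R^{\ell,*}$. The remaining structural properties of $\psi$ then come for free: a convex function on $[0,1]$ taking values in $[0,+\infty)$ with $\psi(0)=0$ attains its minimum at the left endpoint and is therefore non-decreasing; classification-calibration forces $\psi(\theta)>0$ for all $\theta>0$, which rules out any flat segment and hence makes $\psi$ strictly increasing and invertible; and the same positivity-plus-convexity yields the stated equivalence $\psi(u_i)\to0\iff u_i\to0$. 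This equivalence is in fact the defining characterization of calibration in \citet{bartlett2006convexity}, so it needs no separate argument.

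The crux is reconciling the Bayes risks of the induced problem with the starred quantities $R_\SD^*$ and $R_\SD^{\ell,*}$, which are infima taken over pointwise $f$ lifted through the product, not over unrestricted pair-scores $g$. For the zero-one side this is settled by the key factorization $2\rho(\x,\x')-1=(2\eta_{+1}(\x)-1)(2\eta_{+1}(\x')-1)$, whence $\sign(2\rho-1)=\sign(2\eta_{+1}(\x)-1)\,\sign(2\eta_{+1}(\x')-1)$: the pointwise Bayes classifier, lifted through the product, realizes the pairwise Bayes rule, so $R_\SD^*=R^{01,*}$ and the left-hand sides match exactly. For the surrogate side one must check that the product family is rich enough to attain the Bayes surrogate risk, i.e. that $R_\SD^{\ell,*}=R^{\ell,*}$. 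I expect this to be the main obstacle, because the minimizer of $C_\rho$ is a nonlinear function of $\rho$, hence of the symmetric pair $(\eta_{+1}(\x),\eta_{+1}(\x'))$, which generally does not factor as a product; indeed, for the logistic loss the conditional minimizer is the pair log-odds, which is not of product form once the class-posteriors are asymmetric.

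I would resolve this in two complementary ways. For the loss classes that drive the paper's headline bounds---most importantly the hinge loss, where the conditional minimizer is $\pm1$ and is realized by the pointwise $f=\sign(2\eta_{+1}-1)$---the product family does attain the Bayes surrogate risk, so $R_\SD^{\ell,*}=R^{\ell,*}$ and the inequality above is already the claim. For a general calibrated $\ell$, I would read $R_\SD^{\ell,*}$ as the Bayes surrogate risk of the induced problem (the infimum over all measurable scores), which is the smallest and hence most conservative baseline; the inequality is then exactly \citet{bartlett2006convexity}'s bound, and any slack between the product-restricted infimum and the Bayes surrogate risk is deferred to the approximation-error term of the downstream excess-risk analysis rather than to the calibration step. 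Under either reading the proof reduces to (i) the factorization identity, (ii) the elementary monotonicity and invertibility facts about $\psi$, and (iii) a direct appeal to the $\psi$-transform theorem; the only genuinely non-mechanical point is the product-class-versus-Bayes gap on the surrogate side just discussed.
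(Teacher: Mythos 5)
Your proposal is correct and arrives at the same inequality, but by a genuinely different route than the paper. You lift the problem to the product space $\mathcal{X}\times\mathcal{X}$ with label $T$ and score $g(\x,\x')\defeq f(\x)f(\x')$ and invoke Theorem~1 of \citet{bartlett2006convexity} wholesale (legitimate, since that theorem holds for every measurable score and every distribution, and the $\psi$-transform depends only on $\ell$), whereas the paper works through the calibration-function machinery of \citet{steinwart2007compare}: it writes out the conditional risks, changes variables to $\tilde\eta = 1-\eta-\eta'+2\eta\eta'$ and $\tilde\alpha=\alpha\alpha'$, and shows the pairwise calibration function coincides with the pointwise one, $\tilde\psi_\SD=\tilde\psi_\PN$. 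Both arguments hinge on exactly the two structural facts you isolate: the factorization $2\tilde\eta-1=(2\eta_{+1}(\x)-1)(2\eta_{+1}(\x')-1)$, which makes the pointwise Bayes classifier, lifted through the product, attain the pairwise zero-one Bayes risk (so the $\zo$ baselines agree), and the surjectivity of $(\alpha,\alpha')\mapsto\alpha\alpha'$ onto $\R$, which in the paper appears as the reduction of the conditional infimum over pairs to an infimum over $\tilde\alpha$. Your route is shorter; the paper's buys an explicit identification of $\psi$ with the pointwise $\psi$-transform, which your argument also yields but less visibly. On the one subtlety you flag---whether $R_\SD^{\ell,*}$ is the product-constrained infimum $\inf_f R_\SD^\ell(f)$ or the unconstrained pair-Bayes surrogate risk---your resolution matches what the paper's own proof actually delivers: Steinwart's Theorem~2.13, as invoked there, gives the inequality with the \emph{unconstrained} baseline (predictions in that framework may depend arbitrarily on the pair), and since the product-constrained infimum dominates the unconstrained one, the constrained reading of $\Er_\SD^\ell(f)$ would make the displayed inequality strictly stronger than what either argument establishes. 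Your hinge-loss observation correctly identifies when the two baselines coincide, and for general calibrated losses the gap is real (for logistic, the conditional minimizer is $2\,\mathrm{artanh}\bigl((2\eta-1)(2\eta'-1)\bigr)$, not of product form) and is absorbed into the approximation-error term exactly as you propose. One small loose end: with the convention $\sign(0)=-1$, the identity $\sign(f(\x))\sign(f(\x'))=\sign(f(\x)f(\x'))$ fails on $\{f=0\}$; this is harmless, because whenever $\sign(f(\x))\sign(f(\x'))$ disagrees with the Bayes sign one still has $f(\x)f(\x')(2\tilde\eta-1)\le 0$, so the pointwise step in Bartlett's proof, namely $S_\PN^\ell(\tilde\alpha,\tilde\eta)\ge H_\PN^{\ell,-}(\tilde\eta)$ on that event, goes through---the paper handles the analogous boundary set $(\alpha,\alpha')=(0,0)$ explicitly in its own proof.
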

Although the similar result to Lemma~\ref{lemma:surrogate_calibration} has already been known for $R_\PN$~\citep[Theorem~1]{bartlett2006convexity},
the proof for $R_\SD$ requires special care to treat the product of prediction functions properly.

Then, the excess risk bound for $R_\CE$ is derived
based on Lemma~\ref{lemma:surrogate_calibration} and the uniform bound.
\begin{lemma}
\label{lemma:est_clustering_error_minimization}
Let $f^* \in \mathcal{F}$ be a minimizer of $R_\SD^\ell$,
and $\hat{f} \in \mathcal{F}$ be a minimizer of $\widehat{R}_\SD^\ell$ defined in Eq.~\eqref{eq:f_hat}.
Assume that $\ell(\cdot,\pm 1)$ is $\rho$-Lipschitz ($0 < \rho < \infty$),
and that $\| f \|_\infty \leq C_b$ for any $f \in \mathcal{F}$ for some $C_b$.
Let $C_\ell \defeq \sup_{t \in \{\pm1 \}} \ell(C_b^2, t)$.
For any $\delta > 0$, with probability at least $1-\delta$,
\begin{align*}
    & \Er_\CE(\sign \circ \hat f) \\
    & \le \sqrt{\frac{1}{2}\psi^{-1} \bigg(
        \Er_\SD^\ell(f^*) +
        4\rho \mathfrak{R}_{m_1}(\mathcal{F}) + \sqrt{\frac{2 C_\ell^2 \log\frac{2}{\delta} }{m_1}}
    \bigg)}.
\end{align*}
\end{lemma}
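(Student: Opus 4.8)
The plan is to chain three ingredients: a deterministic transfer from the clustering excess risk to the pairwise excess risk (using Theorem~\ref{theo:relationship}), the surrogate calibration bound of Lemma~\ref{lemma:surrogate_calibration}, and a uniform-convergence bound on the surrogate estimation error. First I would establish, for every measurable $f$, the transfer inequality $\Er_\CE(\sign \circ f) \le \sqrt{\tfrac{1}{2}\Er_\SD(\sign \circ f)}$. Writing $g(u) = \tfrac12 - \tfrac{\sqrt{1-2u}}{2}$, Theorem~\ref{theo:relationship} gives $R_\CE(\sign \circ f) = g(R_\SD(\sign \circ f))$, and since $g$ is continuous and strictly increasing the infima correspond, $R_\CE^* = g(R_\SD^*)$. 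Substituting $a = R_\SD^*$ and $b = R_\SD(\sign \circ f) \ge a$ and using $\sqrt{1-2a}-\sqrt{1-2b} = \tfrac{2(b-a)}{\sqrt{1-2a}+\sqrt{1-2b}}$, the claimed inequality reduces, after squaring, to $1-2b \le \sqrt{(1-2a)(1-2b)}$, i.e. $b \ge a$, which holds by optimality of $R_\SD^*$.

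Next I would apply Lemma~\ref{lemma:surrogate_calibration} to $\hat f$: since $\psi$ is invertible and non-decreasing, $\Er_\SD(\sign \circ \hat f) \le \psi^{-1}(\Er_\SD^\ell(\hat f))$, so combining with the transfer step gives
\[
\Er_\CE(\sign \circ \hat f) \le \sqrt{\tfrac{1}{2}\,\psi^{-1}\!\big(\Er_\SD^\ell(\hat f)\big)}.
\]
It then remains to bound the surrogate estimation error $\Er_\SD^\ell(\hat f)$ and push it through the monotone $\psi^{-1}$.

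For that I would decompose $\Er_\SD^\ell(\hat f) = \big[R_\SD^\ell(\hat f) - R_\SD^\ell(f^*)\big] + \Er_\SD^\ell(f^*)$, the second term being exactly the approximation term in the statement. For the estimation term, optimality of $\hat f$ for $\widehat R_\SD^\ell$ yields the standard sandwich $R_\SD^\ell(\hat f) - R_\SD^\ell(f^*) \le 2\sup_{f \in \mathcal{F}}|R_\SD^\ell(f) - \widehat R_\SD^\ell(f)|$. Since $\|f\|_\infty \le C_b$ forces $f(\x)f(\x') \in [-C_b^2, C_b^2]$ and $\ell(\cdot,\pm1)$ is $\rho$-Lipschitz, the loss is bounded by $C_\ell$ on this range, so replacing one of the $m_1$ i.i.d.\ triples changes $\widehat R_\SD^\ell$ by at most $C_\ell/m_1$. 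McDiarmid's inequality followed by symmetrization and the Talagrand contraction inequality (applied to the per-sample $\rho$-Lipschitz maps $z \mapsto \ell(z, t_i)$ on the product class, whose Rademacher complexity is $\mathfrak{R}_{m_1}(\mathcal{F})$) gives, with probability at least $1-\delta$,
\[
\sup_{f \in \mathcal{F}}\big|R_\SD^\ell(f) - \widehat R_\SD^\ell(f)\big| \le 2\rho\,\mathfrak{R}_{m_1}(\mathcal{F}) + C_\ell\sqrt{\tfrac{\log(2/\delta)}{2m_1}}.
\]
Doubling and simplifying $2C_\ell\sqrt{\log(2/\delta)/(2m_1)} = \sqrt{2C_\ell^2\log(2/\delta)/m_1}$ bounds $\Er_\SD^\ell(\hat f)$ by the argument of $\psi^{-1}$ in the statement; inserting this into $\sqrt{\tfrac12\psi^{-1}(\cdot)}$ and using monotonicity of $\psi^{-1}$ completes the proof.

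The main obstacle I anticipate is the first (transfer) step: it is specific to this problem and needs the correspondence $R_\CE^* = g(R_\SD^*)$ together with the non-obvious square-root manipulation; the calibration and uniform-convergence pieces are standard. Some care is still required to verify that the triples are genuinely i.i.d.\ (so that plain McDiarmid applies without the interdependence issues that arise when pairs are generated from pointwise data) and that the constants line up, the factor $4\rho$ arising precisely as $2 \times 2\rho$ from doubling the uniform deviation.
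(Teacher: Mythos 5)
Your proposal is correct and follows essentially the same route as the paper's proof: the transfer from $\Er_\CE$ to $\Er_\SD$ via Theorem~\ref{theo:relationship} (your conjugate-and-square manipulation is algebraically equivalent to the paper's elementary inequality $\sqrt{\beta}-\sqrt{\alpha}\leq\sqrt{\beta-\alpha}$), then Lemma~\ref{lemma:surrogate_calibration} with monotonicity of $\psi^{-1}$, then the approximation/estimation decomposition with the standard $2\sup_{f}|R_\SD^\ell(f)-\hat R_\SD^\ell(f)|$ bound handled by McDiarmid, symmetrization, and Talagrand contraction, yielding the identical constants $4\rho\,\mathfrak{R}_{m_1}(\mathcal{F})$ and $\sqrt{2C_\ell^2\log(2/\delta)/m_1}$.
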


Next, the class assignment error probability using pairwise supervision is analyzed.
\begin{lemma}
\label{lemma:est_class_assingment}
Assume that $\pi_+ \neq \frac{1}{2}$.
Let $\hat{s}$ be the solution defined in Eq.~\eqref{eq:s_hat}.
Then, we have
\begin{align*}
    & \Pr \Big( \hat{s} \neq \argmin_{s \in \{\pm 1\}}R_\PN(s \cdot \sign \circ \hat{f}) \Big) \\
    & \leq \exp \Big( -\frac{m_2}{2} (2 \pi_+ - 1)^2 \big(2 R_\PN(\sign \circ \hat{f}) - 1 \big)^2 \Big).
\end{align*}
\end{lemma}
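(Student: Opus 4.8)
The plan is to reduce the class-assignment error to a concentration event for the empirical quantity $\hat{Q}(h)$ and then apply Hoeffding's inequality, writing throughout $h \defeq \sign \circ \hat{f}$ and treating it as fixed by conditioning on $\mathfrak{D}_1$. Since $\mathfrak{D}_2$ is drawn independently of $\mathfrak{D}_1$, conditionally on $\hat{f}$ the estimate $\hat{Q}(h)$ is an average of $m_2$ i.i.d.\ random variables $Z_i \defeq \tfrac{1}{2}\big(\indicator\{h(\x_i) \neq \tau_i\} + \indicator\{h(\x_i') \neq \tau_i\}\big) \in [0,1]$ with mean $Q(h)$. The whole bound will thus be established conditionally on $\hat{f}$, which is exactly the claimed statement since its right-hand side depends on $\hat{f}$ only through $R_\PN(\sign \circ \hat{f})$.

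First I would observe that $\hat{s}$ and $s^*$ share the common factor $\sign(2\pi_+ - 1)$, which is a fixed nonzero sign because $\pi_+ \neq \tfrac{1}{2}$. Hence by Theorem~\ref{theo:class_assignment} the miscalibration event $\{\hat{s} \neq s^*\}$ coincides with $\{\sign(1 - 2\hat{Q}(h)) \neq \sign(1 - 2Q(h))\}$, i.e.\ the empirical and population thresholds land on opposite sides of $\tfrac{1}{2}$. A case split on whether $Q(h) < \tfrac{1}{2}$ or $Q(h) > \tfrac{1}{2}$ shows that this disagreement forces $|\hat{Q}(h) - Q(h)| \geq \tfrac{1}{2}|1 - 2Q(h)|$, and the deviation is one-directional in each branch; the boundary case $Q(h) = \tfrac{1}{2}$ need not be handled, since then $R_\PN(h) = \tfrac{1}{2}$ and the claimed bound equals $\exp(0) = 1$, holding vacuously.

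Next I would invoke the one-sided Hoeffding inequality for the bounded variables $Z_i$: because the required deviation is in a single direction in either branch, $\Pr(\hat{s} \neq s^*) \leq \exp\big(-2 m_2 (\tfrac{1}{2}|1 - 2Q(h)|)^2\big) = \exp\big(-\tfrac{m_2}{2}(1 - 2Q(h))^2\big)$, with no spurious factor of two. It remains to rewrite $1 - 2Q(h)$ in terms of $R_\PN(h)$. Using $T = YY'$, the symmetry of $X$ and $X'$ (so the $X'$-term contributes identically and the $\tfrac{1}{2}$ averaging is harmless), and the independence of the pair, I would compute $Q(h) = \pi_- \Pr(h(X) = Y) + \pi_+ \Pr(h(X) \neq Y) = (1 - \pi_+) + (2\pi_+ - 1) R_\PN(h)$, the identity already underlying Theorem~\ref{theo:class_assignment}. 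This yields the clean factorization $1 - 2Q(h) = (2\pi_+ - 1)(1 - 2R_\PN(h))$, and substitution produces the stated bound $\exp\big(-\tfrac{m_2}{2}(2\pi_+ - 1)^2 (2 R_\PN(\sign \circ \hat{f}) - 1)^2\big)$.

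The main obstacle is this last algebraic reduction: correctly evaluating $Q(h)$ requires unwinding $\indicator\{h(X) \neq YY'\}$ by multiplying through by $Y$ to turn it into $\indicator\{h(X) Y \neq Y'\}$, then integrating out the independent label $Y'$ against its marginal $(\pi_+, \pi_-)$. This is precisely the computation that produces the $\sign(2\pi_+ - 1)$ factor in Theorem~\ref{theo:class_assignment}, so I could alternatively cite it directly from that proof; the remaining concentration argument is then routine.
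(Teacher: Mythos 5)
Your proof is correct and takes essentially the same route as the paper's: both reduce the class-assignment error to the event that $\hat{Q}(h)$ and $Q(h)$ land on opposite sides of $\tfrac{1}{2}$, apply one-sided Hoeffding's inequality to obtain $\exp\big(-\tfrac{m_2}{2}(1-2Q(h))^2\big)$, and then substitute the identity $Q(h) = (2\pi_+ - 1)R_\PN(h) + 1 - \pi_+$. The only cosmetic difference is that you derive this identity by directly integrating out $Y'$, whereas the paper pulls it from the equivalent risk expression of Shimada et al.\ (2019) already used in Theorem~\ref{theo:class_assignment} --- the same underlying computation.
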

Several observations from Lemma~\ref{lemma:est_class_assingment} follow.
As $\pi_+ \to \frac{1}{2}$,
the upper bound becomes looser.
This comes from the fact that the estimation of the pointwise classification error with pairwise supervision becomes more difficult as $\pi_+ \to \frac{1}{2}$~\citep{shimada2019classification}.
Moreover,
the discriminability of function $\hat{f}$,
i.e., $R_\PN(\sign \circ \hat{f})$,
appears in the inequality and thus it is directly related to the error rate.
Intuitively,
if a given function classifies a large portion of data correctly,
the optimal sign can be identified easily.

Finally, an overall excess risk bound is derived
by combining Lemmas~\ref{lemma:est_clustering_error_minimization}, \ref{lemma:est_class_assingment}, and the fact \eqref{eq:successful_class_assignment}.
Let $\Er_\PN(h)$ denote the excess risk $R_\PN(h) - R_\PN^*$.
\begin{theorem}
\label{theo:est_proposed}
Suppose that we have $\pi_+ \neq \frac{1}{2}$.
Let $r \defeq \exp ( -\frac{m_2}{2} (2 \pi_+ - 1)^2 (2 R_\PN(\sign \circ \hat{f}) - 1 )^2 )$.
Under the same assumptions as Lemma~\ref{lemma:est_clustering_error_minimization},
for any $\delta > r$, with probability at least $1-\delta$,
\begin{align*}
    & \Er_\PN (\hat{s} \cdot \sign \circ \hat{f}) \\
    & \leq \sqrt{ \! \frac{1}{2}\psi^{-1}\bigg( \!
        \Er_\SD^\ell(f^*) +
        4\rho \mathfrak{R}_{m_1}(\mathcal{F}) \! + \! \sqrt{\frac{2 C_\ell^2 \log\tfrac{2}{\delta-r}}{m_1}}
    \! \bigg)}.
\end{align*}
\end{theorem}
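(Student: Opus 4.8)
The plan is to combine the two independent error sources—clustering error minimization in Step~1 and class assignment in Step~2—through a union bound, using the deterministic reduction \eqref{eq:successful_class_assignment} as the bridge. Recall that $\hat{f}$ is a function of $\mathfrak{D}_1$ only, whereas $\hat{s}$ is a function of $\hat{f}$ and $\mathfrak{D}_2$; since $\mathfrak{D}_1$ and $\mathfrak{D}_2$ are drawn independently, conditioning on $\mathfrak{D}_1$ (equivalently, on $\hat{f}$) leaves $\mathfrak{D}_2$ fresh, and this independence is exactly what makes the union bound legitimate.

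First I would fix $\hat{f}$ by conditioning on $\mathfrak{D}_1$, so that $R_\PN(\sign\circ\hat{f})$ and hence $r$ become deterministic. Write $B(\delta')$ for the right-hand side of Lemma~\ref{lemma:est_clustering_error_minimization} with confidence parameter $\delta'$, i.e.\ the expression obtained by replacing $\log\tfrac{2}{\delta}$ with $\log\tfrac{2}{\delta'}$. The key set inclusion is
\begin{equation*}
    \{ \Er_\PN(\hat{s}\cdot\sign\circ\hat{f}) > B(\delta') \} \subseteq \{ \hat{s} \neq s^* \} \cup \{ \Er_\CE(\sign\circ\hat{f}) > B(\delta') \},
\end{equation*}
which follows from \eqref{eq:successful_class_assignment}: on the event $\{\hat{s} = s^*\}$ the pointwise excess risk equals the clustering excess risk, so the left event can only occur if class assignment fails or the clustering bound is violated. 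Taking probabilities and applying the union bound gives
\begin{equation*}
    \Pr\big( \Er_\PN(\hat{s}\cdot\sign\circ\hat{f}) > B(\delta') \big) \le \Pr(\hat{s}\neq s^*) + \Pr\big(\Er_\CE(\sign\circ\hat{f}) > B(\delta')\big).
\end{equation*}
Next I would bound each term: Lemma~\ref{lemma:est_class_assingment} gives $\Pr(\hat{s}\neq s^*) \le r$ (over $\mathfrak{D}_2$, with $\hat{f}$ fixed), while Lemma~\ref{lemma:est_clustering_error_minimization} applied with the confidence parameter $\delta'$ gives $\Pr(\Er_\CE(\sign\circ\hat{f}) > B(\delta')) \le \delta'$ (over $\mathfrak{D}_1$). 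Choosing $\delta' = \delta - r$, which is admissible precisely because the hypothesis $\delta > r$ guarantees $\delta' > 0$, the two terms sum to $(\delta - r) + r = \delta$, and $B(\delta - r)$ is exactly the claimed bound with $\log\tfrac{2}{\delta - r}$. Rearranging yields the statement with probability at least $1-\delta$.

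The step needing the most care is the simultaneous handling of the two randomness sources together with the fact that $r$ itself depends on $\hat{f}$. Conditioning on $\mathfrak{D}_1$ is what lets me treat $r$ as a constant when invoking Lemma~\ref{lemma:est_class_assingment}, and the independence of $\mathfrak{D}_1$ and $\mathfrak{D}_2$ is what prevents the two error events from correlating: the event $\{\Er_\CE(\sign\circ\hat{f}) > B(\delta')\}$ depends only on $\mathfrak{D}_1$, while $\{\hat{s}\neq s^*\}$ is controlled conditionally on $\mathfrak{D}_1$ over $\mathfrak{D}_2$. Had the same pairs been reused in both steps, these two events would no longer decouple and the union bound above would fail—this is exactly the reason the footnote insists on splitting $\mathfrak{D}_\mathrm{train}$. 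The remaining manipulations are routine substitutions into Lemmas~\ref{lemma:est_clustering_error_minimization} and~\ref{lemma:est_class_assingment}.
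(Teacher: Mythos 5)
Your proposal is correct and takes essentially the same route the paper intends: the paper itself gives no detailed proof of Theorem~\ref{theo:est_proposed} beyond stating that it follows by combining Lemma~\ref{lemma:est_clustering_error_minimization}, Lemma~\ref{lemma:est_class_assingment}, and the fact \eqref{eq:successful_class_assignment} via a union bound (cf.\ the footnote on splitting $\mathfrak{D}_\mathrm{train}$), which is exactly your decomposition with the choice $\delta' = \delta - r$. Your explicit conditioning on $\mathfrak{D}_1$ to render $r$ deterministic when invoking Lemma~\ref{lemma:est_class_assingment} addresses a subtlety the paper glosses over ($r$ depends on the random $\hat{f}$), and is a clarification of, not a departure from, the paper's argument.
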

In the proof of Theorem~\ref{theo:est_proposed}, the surrogate excess risk $\Er_\SD^\ell(\hat{f})$ is decomposed into the estimation error and the approximation error $\Er_\SD^\ell(f^*)$.
If $\mathfrak{R}_{m_1}(\mathcal{F}) = o(1)$,
the estimation error asymptotically vanishes and the upper bound approaches to the approximation error in probability.
Under this condition, similarity learning successfully minimizes our desideratum $\Er_\PN$, with a flexible enough $\mathcal{F}$ entailing the small approximation error.
For example,
linear-in-parameter model $\mathcal{F}=\left\{ f(\x) = \w^\top \boldsymbol{\phi}(\x) + b \right\}$ satisfies $\mathfrak{R}_{m_1}(\mathcal{F}) = O({m_1}^{-\frac{1}{2}})$
as shown in \citet[Lemma~5]{kuroki2019unsupervised},
where $\w \in \R^k$ and $b \in \R$ are weights and bias parameters and $\boldsymbol{\phi}: \R^d \rightarrow \R^k$ are mapping functions.
Note that our result is stronger than \citet{zhang2007value}
because they only provided the asymptotic convergence,
while Theorem~\ref{theo:est_proposed} provides a finite sample guarantee.

\paragraph{Discussion.}
Since class assignment admits the exponential decay of the error probability (Lemma~\ref{lemma:est_class_assingment}) under the moderate condition ($\pi_+ \ne \frac{1}{2}$),
we may set $m_2 \ll m_1$ in practice.
In contrast, our excess risk bound of clustering error minimization (Lemma~\ref{lemma:est_clustering_error_minimization}) is governed in part by $\psi$.
The explicit rate depends on specific choices of loss functions:
e.g., the hinge loss gives $\psi(u) = u$, and under the assumption $\mathfrak{R}_{m_1}(\mathcal{F}) = O({m_1}^{-\frac{1}{2}})$,
the explicit rate is $O_p({m_1}^{-\frac{1}{4}})$.%
\footnote{
    As another example, the logistic loss gives $\psi(u) = \Omega(u^2)$,
    entailing the explicit rate $O_p({m_1}^{-\frac{1}{8}})$ for the excess risk bound (Lemma~\ref{lemma:est_clustering_error_minimization}).
    For more examples of $\psi$, see \citet[Table~1]{steinwart2007compare}.
}
This rate is no slower than the pointwisely supervised case $O_p({m}^{-\frac{1}{2}})$
because $O({m}^2)$ pairwise supervision can be generated with $m$ pointwise labels.

Note again that CIPS assumes $\pi_+ \ne \frac{1}{2}$ only in class assignment (Step~2 \& Lemma~\ref{lemma:est_class_assingment}),
not in clustering error minimization (Step~1 \& Lemma~\ref{lemma:est_clustering_error_minimization}).
This is a subtle but notable difference from earlier similarity learning methods based on unbiased classification risk estimators,
which requires $\pi_+ \ne \frac{1}{2}$ even in risk minimization (see \citet{shimada2019classification}).

Our excess risk bound (Theorem~\ref{theo:est_proposed}) resembles transfer bounds among binary classification, class probability estimation (CPE), and bipartite ranking.
\citet{narasimhan2013relationship} reduced classification and CPE to ranking
and showed that the excess risks of both classification and CPE can be upper-bounded by that of ranking.
As can be seen in \citet{narasimhan2013relationship},
the excess risk of classification/CPE slows down to be $O(\lambda(m)^{-\frac{1}{2}})$ suppose that the excess risk of ranking is $\lambda(m)$.
The same decay is observed in Theorem~\ref{theo:est_proposed} as well, reducing classification to similarity learning.
This decay $O((\cdot)^{-\frac{1}{2}})$ can be regarded as a \emph{cost arising from problem reduction}.

\section{Experiments}
\label{sec:experiments}

\begin{figure*}[t]
\centering
\begin{minipage}{.60\textwidth}
  \centering
  \subfigure[$\langle\clubsuit\rangle$ $\pi_+ = 0.5$, various $m$ \label{fig:consistency_mnist}]{\includegraphics[height=150pt]{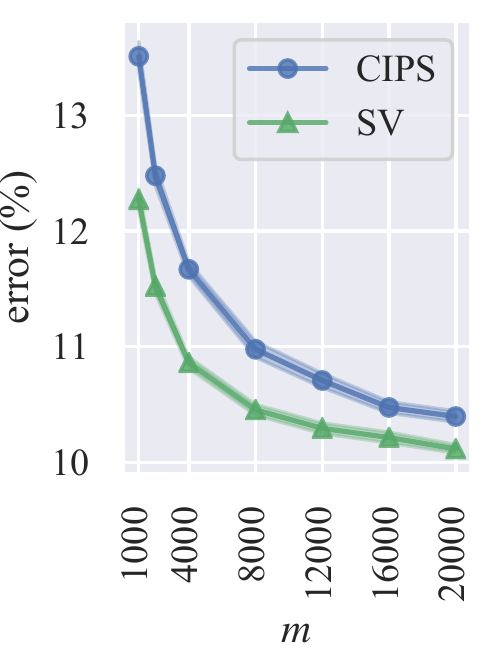}}
  \subfigure[$\langle\heartsuit\rangle$ various $\pi_+$, $m = 10000$ \label{fig:prior_mnist}]{\includegraphics[height=155pt]{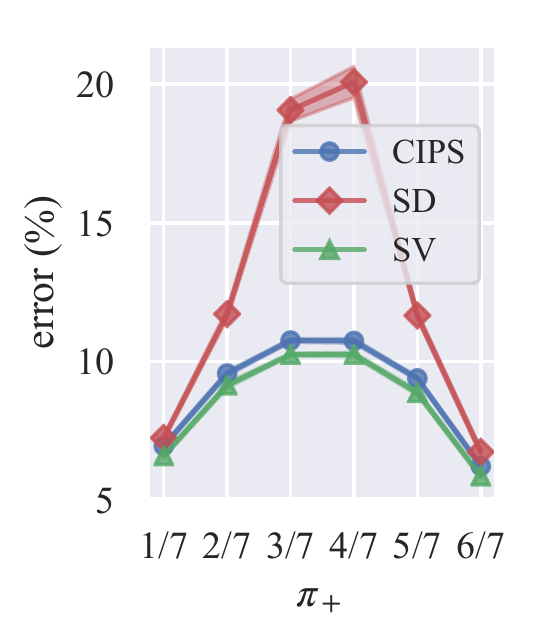}}
  \captionsetup{width=.92\linewidth}
  \captionof{figure}{
    (left) Mean clustering error and standard error (shaded areas) over $20$ trials on MNIST.
    (right) Mean clustering error and standard error (shaded areas) over $10$ trials on MNIST.
  }
\end{minipage} \hfill
\begin{minipage}{.38\textwidth}
  \centering
  \includegraphics[height=150pt]{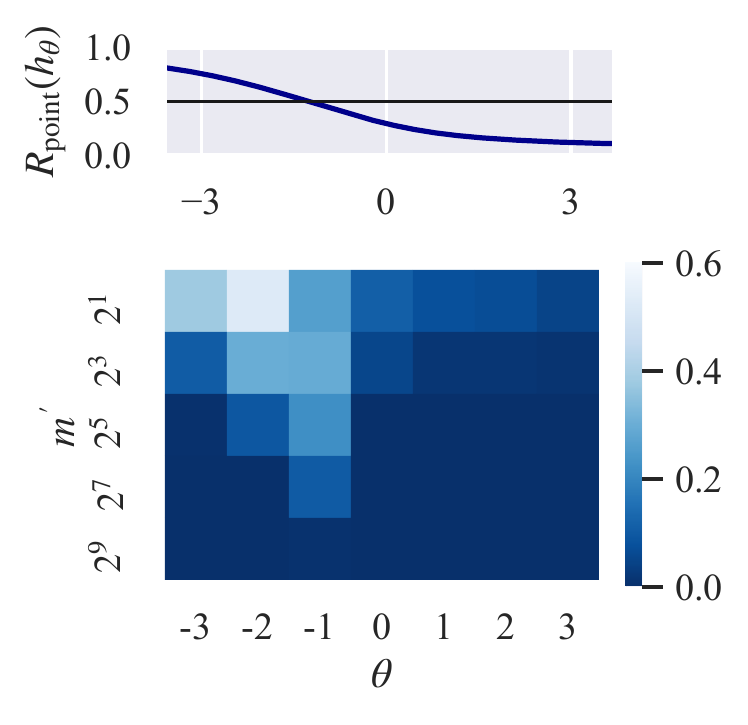}
  \captionsetup{width=.92\linewidth}
  \captionof{figure}{
    $\langle\spadesuit\rangle$ Classification error for each threshold classifier (upper) and
    the error probability of the proposed class assignment method over $\num{10000}$ trials (bottom) on the synthetic Gaussian dataset with $\pi_+ = 0.1$
  }
  \label{fig:class_assignment}
\end{minipage}%
\end{figure*}

This section shows simulation results to confirm our findings:
$\langle\clubsuit\rangle$ the sample complexity of the clustering error minimization via similarity learning (Lemma~\ref{lemma:est_clustering_error_minimization}),
$\langle\heartsuit\rangle$ the class-prior effect in similarity learning (Discussion in Section~\ref{sec:analysis}),
and
$\langle\spadesuit\rangle$ class assignment without pointwise supervision (Lemma~\ref{lemma:est_class_assingment}).
In addition, we compared with baselines using benchmark and real-world datasets (PubMed-Diabetes).
All experiments except PubMed-Diabetes were carried out with 3.60GHz Intel\textsuperscript{\tiny\textregistered} Core\textsuperscript{TM} i7-7700 CPU and GeForce GTX 1070.
Experiments on PubMed-Diabetes were carried out with 1.40GHz Intel\textsuperscript{\tiny\textregistered} Xeon Phi\textsuperscript{TM} 7250.
Full results are included in Appendix~\ref{sec:experiments-full}.
All simulation codes are available in the supplementary material.

\paragraph{Clustering error minimization on benchmark datasets.}
Tabular datasets from LIBSVM~\citep{chang2011libsvm} and UCI~\citep{Dua:2019} repositories
and MNIST dataset~\citep{lecun1998mnist} were used in benchmarks.
The labels of MNIST were binarized into even vs. odd digits.
Pairwise supervision was generated by random coupling of pointwise data in the original datasets.
We briefly introduce baselines below.
Constrained $k$-means clustering (CKM)~\citep{wagstaff2001constrained} and semi-supervised spectral clustering (SSP)~\citep{chen2012spectral} are semi-supervised clustering methods based on $k$-means~\citep{macqueen1967some} and spectral clustering~\citep{von2007tutorial}, respectively.
A method proposed by \citet{zhang2007value} (OVPC) and similar-dissimilar classification (SD)~\citep{shimada2019classification} are classification methods using pairwise supervision, which admit the generalization guarantee.
Meta-classification likelihood (MCL)~\citep{hsu2018multi} is an approach based on maximum likelihood estimation over pairwise labels.
For reference,
$k$-means clustering (KM) and supervised learning (SV) were compared.
For classification methods that require model specification (i.e., CIPS, SD, MCL, OVPC, and SV),
a linear model $f(\x)=\w^\top \x + b$ was used.
For CIPS, SD, and SV, we used the logistic loss, which is classification-calibrated.
The rest of implementation details is deferred to Appendix~\ref{sec:experiments-full}.

$\langle\clubsuit\rangle$ First,
in order to verify the sample complexity behavior in Lemma~\ref{lemma:est_clustering_error_minimization},
classifiers were trained with MNIST.
The number of pairwise data $m$ was set to each of $\{ \num{1000}, \num{2000}, \num{4000}, \num{8000}, \num{12000}, \num{16000}, \num{20000} \}$.
Figure~\ref{fig:consistency_mnist} presents the performances of CIPS and SV.
This demonstrates that the clustering error of CIPS constantly decreases as $m$ grows,
which is consistent with Lemma~\ref{lemma:est_clustering_error_minimization}.
Moreover,
CIPS performed more efficiently than expected in terms of sample complexity---as we discussed in Section~\ref{sec:analysis},
we expect that CIPS with $O(m^2)$ pairs performs comparably to SV with $m$ data points.

$\langle\heartsuit\rangle$ Next,
to see the effect of the class prior,
we compared CIPS, SD, and SV with various class priors.
In this experiment,
train and test data were generated from MNIST under the controlled class prior $\pi_+$,
where $\pi_+$ was set to each of $\{\frac{1}{7}, \ldots, \frac{6}{7} \}$.
For each trial,
$\num{10000}$ pairs were randomly subsampled from MNIST for training and the performance was evaluated with another $\num{10000}$ labeled examples.
The average clustering errors and standard errors over ten trials are plotted in Figure~\ref{fig:prior_mnist}.
This result indicates that CIPS is less affected compared with SD.

Finally, we show the benchmark performances of each method on the tabular datasets in Table~\ref{table:tabular_classification},
where each cell contains the average clustering error and the standard error over $\num{20}$ trials.
For each trial,
we subsampled $m \in \{100, 1000\}$ pairs for training data
and $\num{1000}$ pointwise examples for evaluation.
This result demonstrates CIPS performs better with large enough samples than most of the baselines and comparably to MCL.
The performance difference between CIPS and clustering methods
implies that larger samples do improve the downstream performance of CIPS thanks to its generalization guarantee (Theorem~\ref{theo:est_proposed}).

\begin{table*}[t]
\centering
\caption{
Mean clustering error and standard error on different benchmark datasets over $20$ trials.
Bold numbers indicate outperforming methods (excluding SV):
among each configuration, the best one is chosen first,
and then the comparable ones are chosen by one-sided t-test with the significance level $5\%$.
}
\label{table:tabular_classification}
\scalebox{0.9}{
\begin{tabular}{cccccccccc}
\toprule
        dataset & \multirow{2}{*}{$m$ }&         \multirow{2}{*}{CIPS (Ours)} &         \multirow{2}{*}{MCL} &   \multirow{2}{*}{SD} &  \multirow{2}{*}{OVPC} & \multirow{2}{*}{SSP} & \multirow{2}{*}{CKM} & \multirow{2}{*}{KM} &  \multirow{2}{*}{(SV)}\\
        (dim., $\pi_+$) & \\
\midrule
adult & 100  &  39.8 (1.6) &  38.4 (2.1) &  30.8 (0.9) &  45.0 (0.9) &  \bf 24.7 (0.3) &  28.9 (0.8) &  \bf 24.9 (0.5) & 21.9 (0.4)\\
    (123, 0.24)     & 1000 & \bf  17.6 (0.3) & \bf  17.2 (0.3) &  20.5 (0.3) &  45.5 (0.7) &  24.2 (0.3) &  27.9 (0.4) &  27.9 (0.5) & 15.9 (0.3)\\
\midrule
codrna & 100  & \bf  24.7 (1.8) &  32.3 (1.4) & \bf  28.0 (1.3) &  32.0 (2.0) &  45.5 (1.5) &  46.7 (0.6) &  42.5 (1.0) & 11.0 (0.6) \\
   (8, 0.33)      & 1000 &  \bf  6.3 (0.2) &  \bf  6.5 (0.2) &   8.8 (0.4) &  28.3 (2.0) &  44.8 (1.6) &  46.1 (0.4) &  45.4 (0.6) & 6.3 (0.2) \\
\midrule
ijcnn1 & 100  &  16.6 (2.3) &  24.9 (2.9) & \bf  10.7 (0.3) &  41.1 (1.1) &  31.6 (2.0) &  40.0 (1.3) &  31.9 (2.4) & 9.1 (0.2) \\
   (22, 0.10)      & 1000 &  \bf  7.7 (0.2) &  \bf  7.9 (0.2) & \bf   8.1 (0.2) &  42.0 (1.4) &  34.9 (1.7) &  45.9 (0.8) &  43.4 (0.7) & 7.6 (0.2)\\
\midrule
phishing & 100  & \bf  12.7 (2.3) &  \bf 12.8 (2.3) &  34.6 (1.8) &  41.7 (1.0) &  46.6 (0.5) &  24.4 (3.4) &  47.0 (0.5) & 7.6 (0.2)\\

  (44, 0.68)       & 1000 & \bf   6.5 (0.2) &  \bf  6.3 (0.2) &  22.0 (1.0) &  43.8 (1.1) &  45.5 (0.5) &  15.2 (2.7) &  46.4 (0.5) & 6.3 (0.2)\\
\midrule
w8a & 100  &  31.5 (1.9) &  31.4 (2.1) &  11.8 (0.3) &  39.7 (1.4) & \bf   5.3 (1.2) & \bf   6.8 (1.9) & \bf   5.5 (1.3) & 10.3 (0.4)\\
    (300, 0.03)     & 1000 &   2.6 (0.2) &  \bf  2.2 (0.1) &   2.6 (0.2) &  43.1 (0.8) &   3.0 (0.1) &   8.9 (2.6) &   3.7 (0.5) & 2.0 (0.1) \\
\bottomrule
\end{tabular}
}
\end{table*}

\begin{table}[t]
  \centering
  \caption{
    Mean clustering error and standard error on Pubmed-Diabetes dataset over 20 trials.
    Bold numbers indicate outperforming method (excluding SV):
    chosen by the one-sided t-test in the same way as Table~\ref{table:tabular_classification}.
  }
  \vspace{2pt}
  \footnotesize
  \label{table:pubmed}
  \begin{tabular}{lllll}
    \toprule
    CIPS (Ours) & MCL & DML & (SV) \\
    \midrule
    \textbf{86.9 (0.4)} &  \textbf{86.6 (0.4)} &  85.1 (0.2) &  94.7 (0.1) \\
    \bottomrule
  \end{tabular}
\end{table}

\paragraph{Class assignment on synthetic dataset.}
The performance of the proposed class assignment method was empirically investigated on synthetic dataset.
The class-conditional distributions with the standard Gaussian distributions
were used as the underlying distribution:
$p(\x|y=+1) = \mathcal{N}(\x|\mu_+, \sigma_+)$ and
$p(\x|y=-1) = \mathcal{N}(\x|\mu_-, \sigma_-)$.
Throughout this experiment,
we fixed $(\mu_+, \sigma_+, \mu_-, \sigma_-)$ to $(1, 1, -1, 2)$.
Here,
we consider a 1-D thresholded classifier denoted by $h_\theta(x)=1$ if $x \geq \theta$ and $-1$ otherwise.
Given the class prior $\pi_+ \in (0, 1)$,
we generated $m'$ pairwise examples from the above distributions
and apply the proposed class assignment method for a fixed classifier $h_\theta$.
Then, we evaluated whether the estimated class assignment is optimal or not.
Each parameter was set as follows:
$m' \in \{2^1, 2^3, 2^5, 2^7, 2^9\}$,
$\pi_+ = 0.1$,
and $\theta \in \{-3, -2, \ldots, 3 \}$.
For each $(\theta, \pi_+, m')$,
we repeated these data generation processes, class assignment, and evaluation procedure for $\num{10000}$ times.

$\langle\spadesuit\rangle$ The error probabilities are depicted in Figure~\ref{fig:class_assignment}.
We find that the performance of the proposed class assignment method improves as
(i) the number of pairwise examples $m'$ grows
and (ii) the classification error for a given classifier $R_\PN(h_\theta)$ gets away from $\frac{1}{2}$.
These results are aligned with our analysis in Section~\ref{sec:analysis}.
Moreover,
we observed that class assignment improves as the class prior $\pi_+$ becomes farther from $\frac{1}{2}$ in additional experiments in Appendix~\ref{sec:experiments-full}.

\paragraph{Clustering error minimization on a real-world dataset.}

Finally, we show experimental results on a citation network dataset, PubMed-Diabetes.\footnote{
  Available at \url{https://linqs.soe.ucsc.edu/data}.
}
The aim of this experiment is to verify that CIPS is robust enough against real-world noise in pairwise supervision.

We compare CIPS (proposed) with three baselines, MCL (described above), deep metric learning (DML), and SV (supervised).
DML combines metric learning and $k$-means clustering:
we first train embeddings so that their $\ell_2$ distances are close for similar pairs and vice versa,
and apply $k$-means clustering on the embeddings.
More implementation details are deferred to Appendix~\ref{sec:experiments-full}.
The results are reported in Table~\ref{table:pubmed}, from which we can see that
CIPS obtained a meaningful classifier even under the presence of real-world noise,
and worked comparably to MCL and better than DML.

\section{Conclusion}
\label{sec:conclusion}

In this paper,
we presented the underlying relationship between similarity learning and binary classification.
Eventually, the two-step similarity learning procedure for binary classification with only pairwise supervision was obtained.
Our similarity learning can elicit the underlying decision boundary
and is less affected by the class prior.
The post-processing class assignment is less costly than training a new classifier.
Our framework can be related to many existing similarity learning methods with specific losses.
It remains open to discuss the more flexible similarity model and the parallel connection for multi-class classification, in order to fully understand what knowledge we can elicit from similarity information.

\subsubsection*{Acknowledgements}
HB was supported by JSPS KAKENHI Grant Number 19J21094.
MS was supported by JST AIP Acceleration Research Grant Number JPMJCR20U3 and the Institute for AI and Beyond, UTokyo.

\bibliography{ref.bib}
\bibliographystyle{abbrvnat}

\newpage

\appendix
\onecolumn
\section{Proofs of Theorems and Lemmas}
\label{sec:proof}
In this section,
we provide complete proofs for
Theorem~\ref{theo:relationship},
Theorem~\ref{theo:class_assignment},
Lemma~\ref{lemma:surrogate_calibration},
Lemma~\ref{lemma:est_clustering_error_minimization},
and Lemma~\ref{lemma:est_class_assingment}.

\subsection{Proof of Theorem~\ref{theo:relationship}}
We derive an equivalent expression of the pairwise classification error $R_\SD$  as follows.
\begin{equation}
\label{eq:r_sd_pn_relation}
\begin{split}
    R_\SD(h)
    & = \E_{(X, Y) \sim p(\x, y)} \E_{(X', Y') \sim p(\x, y)} \left[\indicator \{ h(X) \cdot h(X') \neq Y Y' \right \}] \\
    & = \E_{(X, Y) \sim p(\x, y)} \E_{(X', Y') \sim p(\x, y)} \left[\indicator \{ h(X) \neq Y \} \indicator\{ h(X') = Y' \right \}] \\
    & \quad + \E_{(X, Y) \sim p(\x, y)} \E_{(X', Y') \sim p(\x, y)} \left[\indicator \{ h(X) = Y \} \indicator\{ h(X) \neq Y' \right \}] \\
    & = \E_{(X, Y) \sim p(\x, y)} \left[\indicator \{ h(X) \neq Y \} \right] \E_{(X', Y') \sim p(\x, y)} \left[ \indicator\{ h(X') = Y' \right \}] \\
    & \quad + \E_{(X, Y) \sim p(\x, y)} \left[\indicator \{ h(X) = Y \} \right] \E_{(X', Y') \sim p(\x, y)} \left[ \indicator\{ h(X') \neq Y' \right \}] \\
    & = 2 \E_{(X, Y) \sim p(\x, y)} \left[\indicator \{ h(X) \neq Y \} \right] \E_{(X', Y') \sim p(\x, y)} \left[ \indicator\{ h(X') = Y' \right \}] \\
    & = 2 \, R_\PN(h) \left(1 - R_\PN(h) \right).
\end{split}
\end{equation}
We can transform the above equation as
\begin{equation}
\label{eq:r_pn_plus}
    R_\PN(h) = \frac{1}{2} \pm \frac{\sqrt{1-2R_\SD(h)}}{2}.
\end{equation}
Then,
we also have
\begin{equation}
\label{eq:r_pn_minus}
    R_\PN(-h) = 1 - R_\PN(h) = \frac{1}{2} \mp \frac{\sqrt{1-2R_\SD(h)}}{2}.
\end{equation}
By combining the results in Eqs.~\eqref{eq:r_pn_plus} and \eqref{eq:r_pn_minus},
we finally obtain Eq.~\eqref{eq:relationship},
which completes the proof of Theorem~\ref{theo:relationship}.
Remark that $0 \le R_\SD(h) \le \frac{1}{2}$ is evident from Eq.~\eqref{eq:r_sd_pn_relation} because of $0 \le R_\PN(h) \le 1$.
\qed

\subsection{Proof of Theorem~\ref{theo:class_assignment}}
The optimal sign $s^*$ can be written as
\begin{equation}
\label{eq:optimal_sign_another}
    s^* = \argmin_{s \in \{\pm 1\}} R_\PN(s \cdot h) = \sign \left(R_\PN(-h) - R_\PN(h) \right).
\end{equation}
According to \citet{shimada2019classification},
$R_\PN$ is equivalently expressed as follows.
\begin{lemma}[Theorem 1 in \citet{shimada2019classification}]
\label{lemma:sd_risk}
Assume that $\pi_+ \neq \frac{1}{2}$.
Then,
the pointwise classification error for a given classifier  $h: \mathcal{X} \rightarrow \mathcal{Y}$ can be equivalently represented as
\begin{align}
    & R_\PN(h) \nonumber \\
    & = \E_{(X, Y) \sim p(\x, y)} \E_{(X', Y') \sim p(\x, y)}
    \left[ \frac{\indicator\left\{h(X) \neq Y Y' \right\} + \indicator\left\{h(X') \neq Y Y' \right\}}{2 \ (2 \pi_+ - 1)} \right] - \frac{1 - \pi_+}{2\pi_+ -1}.
    \label{eq:sd_risk}
\end{align}
\end{lemma}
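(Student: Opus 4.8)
The plan is to verify the identity \eqref{eq:sd_risk} by a direct computation that rewrites every $\zo$ indicator as an affine function of the underlying $\pm 1$ variables and then exploits the independence of the two paired samples to factorize the resulting product term. The key elementary fact is that for any $a,b \in \{\pm 1\}$ we have $\indicator\{a \neq b\} = \tfrac{1-ab}{2}$. Applying this to both indicators in the numerator of \eqref{eq:sd_risk} and using $h(X), h(X'), YY' \in \{\pm 1\}$, the bracketed quantity simplifies to
\[
    \indicator\{h(X) \neq YY'\} + \indicator\{h(X') \neq YY'\}
    = 1 - \tfrac{1}{2}\bigl(h(X) + h(X')\bigr)YY'.
\]

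First I would take the double expectation of the right-hand side above. The constant passes through, so the whole problem reduces to evaluating $\E\E[(h(X)+h(X'))YY']$. Here I would invoke the assumption that $(X,Y)$ and $(X',Y')$ are i.i.d.\ and mutually independent: regrouping as $h(X)YY' = (h(X)Y)\cdot Y'$ and $h(X')YY' = Y\cdot(h(X')Y')$, independence factorizes each summand into a product of one-sample expectations. Writing $a \defeq \E[h(X)Y]$ and $b \defeq \E[Y]$, the i.i.d.\ symmetry gives $\E[h(X')Y'] = a$ and $\E[Y'] = b$, so the cross term equals $2ab$ and therefore the double expectation of the bracket is $1 - ab$.

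Next I would identify these two scalars. Applying the same indicator identity to the pointwise error yields $R_\PN(h) = \E[\tfrac{1-h(X)Y}{2}] = \tfrac{1-a}{2}$, i.e.\ $a = 1 - 2R_\PN(h)$; meanwhile $b = \E[Y] = \pi_+ - \pi_- = 2\pi_+ - 1$, which is precisely the normalizer in \eqref{eq:sd_risk}. This is exactly where the hypothesis $\pi_+ \neq \tfrac{1}{2}$ enters, guaranteeing $b \neq 0$ so that dividing by $2\pi_+ - 1$ is legitimate.

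It remains to substitute into the right-hand side of \eqref{eq:sd_risk}, namely $\frac{1-ab}{2(2\pi_+-1)} - \frac{1-\pi_+}{2\pi_+-1}$, and simplify. Writing $1 - \pi_+ = \tfrac{1 - (2\pi_+-1)}{2}$ puts both terms over the common denominator $2(2\pi_+-1)$, after which the factor $(2\pi_+-1)$ cancels and the expression collapses to $\tfrac{1-a}{2} = R_\PN(h)$, completing the proof. I do not expect a genuine obstacle; the only points requiring care are keeping the factor $YY'$ attached to the correct single-sample expectation when applying independence, and tracking the $(2\pi_+-1)$ normalization through the final cancellation so that the additive constant $-\tfrac{1-\pi_+}{2\pi_+-1}$ is precisely what removes the spurious terms and leaves $R_\PN(h)$.
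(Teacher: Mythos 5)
Your proof is correct, but note that the paper itself never proves this lemma: it is imported wholesale as Theorem~1 of \citet{shimada2019classification} and used as a black box (both in the proof of Theorem~\ref{theo:class_assignment} and, rearranged, in the proof of Lemma~\ref{lemma:est_class_assingment}), so your argument is a genuinely self-contained substitute for the citation rather than a variant of anything in the appendix. Your engine is the $\pm 1$ encoding $\indicator\{a \neq b\} = \tfrac{1-ab}{2}$ for $a, b \in \{\pm 1\}$, which linearizes both indicators, after which independence of $(X,Y)$ and $(X',Y')$ factorizes the cross terms, $\E[h(X)YY'] = \E[h(X)Y]\,\E[Y']$; substituting $a \defeq \E[h(X)Y] = 1 - 2R_\PN(h)$ and $b \defeq \E[Y] = 2\pi_+ - 1$ gives expectation $1 - ab$ for the numerator, and the final cancellation indeed goes through since
\begin{equation*}
    1 - ab - 2(1-\pi_+) \;=\; (2\pi_+ - 1) - (1 - 2R_\PN(h))(2\pi_+ - 1) \;=\; 2(2\pi_+ - 1)\,R_\PN(h),
\end{equation*}
so dividing by $2(2\pi_+ - 1)$ leaves exactly $R_\PN(h)$. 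Compared with the distribution-level derivations typical of the unbiased-risk-estimator literature this result comes from, your computation buys two things. First, it isolates precisely where $\pi_+ \neq \tfrac{1}{2}$ enters: only in the legitimacy of dividing by $2\pi_+ - 1$; the multiplied-through identity $\E\big[\tfrac{1}{2}\big(\indicator\{h(X)\neq YY'\} + \indicator\{h(X')\neq YY'\}\big)\big] = (2\pi_+ - 1)R_\PN(h) + (1 - \pi_+)$ holds for \emph{every} $\pi_+$, and this is exactly the relation $Q(h) = (2\pi_+ - 1)R_\PN(h) + 1 - \pi_+$ that the paper re-extracts from the lemma inside the proof of Lemma~\ref{lemma:est_class_assingment} --- your derivation produces it directly as an intermediate step. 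Second, the sign-arithmetic style is arguably cleaner than the indicator case analysis the paper uses in the analogous computation \eqref{eq:r_sd_pn_relation} for Theorem~\ref{theo:relationship}; the same encoding would reproduce $R_\SD(h) = 2R_\PN(h)(1 - R_\PN(h))$ in one line from $\E[h(X)h(X')YY'] = a^2$. No gaps: the only delicate points (attaching $YY'$ to the correct single-sample expectation, and the common-denominator cancellation) are handled explicitly and correctly.
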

By plugging Eq.~\eqref{eq:sd_risk} into Eq.~\eqref{eq:optimal_sign_another}, we obtain
\begin{equation}
\begin{split}
    & R_\PN(-h) - R_\PN(h) \\
    & = \E_{(X, Y) \sim p(\x, y)} \E_{(X', Y') \sim p(\x, y)}
    \left[ \frac{\indicator\left\{-h(X) \neq Y Y' \right\} + \indicator\left\{-h(X') \neq Y Y' \right\}}{2 \ (2 \pi_+ - 1)} \right] \\
    & \quad - \E_{(X, Y) \sim p(\x, y)} \E_{(X', Y') \sim p(\x, y)}
    \left[ \frac{\indicator\left\{h(X) \neq Y Y' \right\} + \indicator\left\{h(X') \neq Y Y' \right\}}{2 \ (2 \pi_+ - 1)} \right] \\
    & = \E_{(X, Y) \sim p(\x, y)} \E_{(X', Y') \sim p(\x, y)}
    \left[ \frac{1 - 2 \cdot \indicator\left\{h(X) \neq Y Y' \right\}
    +1 - 2 \cdot \indicator\left\{h(X') \neq Y Y' \right\}}{2 \ (2 \pi_+ - 1)} \right] \\
    & = \frac{1}{2 \pi_+ - 1} \E_{(X, Y) \sim p(\x, y)} \E_{(X', Y') \sim p(\x, y)}
    \left[ 1 - \indicator\left\{h(X) \neq Y Y' \right\} - \indicator\left\{h(X') \neq Y Y' \right\} \right]  \\
    & = \frac{1}{2 \pi_+ - 1} (1 - 2Q(h)).
\end{split}
\end{equation}
Thus, we derive the following result.
\begin{equation}
\begin{split}
    s^*_h = \sign \left(R_\PN(-h) - R_\PN(h) \right)
    & = \sign \left( \frac{1}{2 \pi_+ - 1} \right) \cdot \sign(1 - 2Q(h)) \\
    & = \sign (2 \pi_+ - 1 ) \cdot \sign(1 - 2Q(h)),
\end{split}
\end{equation}
which completes the proof of Theorem~\ref{theo:class_assignment}.
Note that $s^*$ can be either $\pm 1$ when $Q(h) = \frac{1}{2}$,
which is equivalent to $R_\PN(h) = R_\PN(-h) = \frac{1}{2}$.
Here we arbitrarily set to $s^* = -\sign(2\pi_+ - 1)$ in this case.
\qed

\subsection{Proof of Lemma~\ref{lemma:surrogate_calibration}}

We introduce the following notation:
\begin{align*}
    S_\PN^\ell(\alpha, \eta) &\defeq \eta\ell(\alpha,+1) + (1-\eta)\ell(\alpha,-1), \\
    H_\PN^\ell(\eta) &\defeq \inf_{\alpha \in \R} S_\PN^\ell(\alpha, \eta), \\
    H_\PN^{\ell,-}(\eta) &\defeq \inf_{\alpha:\alpha(2\eta-1) \leq 0} S_\PN^\ell(\alpha, \eta).
\end{align*}
$S_\PN^\ell$ represents the conditional $\ell$-risk in the following sense:
\begin{equation*}
    \E_X[S_\PN^\ell(f(X), p(Y=+1|X))] = R_\PN^\ell(f),
\end{equation*}
where
\begin{equation*}
    R_\PN^\ell(f) \defeq \E_{(X,Y) \sim p(\x,y)}[\ell(f(X), Y)].
\end{equation*}
Define the function $\psi_\PN:[0,1] \to [0,+\infty)$ by $\psi_\PN = \tilde\psi_\PN^{\star\star}$,
where $\tilde\psi_\PN^{\star\star}$ is the Fenchel-Legendre biconjugate of $\tilde\psi_\PN$,
and
\begin{equation*}
    \tilde\psi_\PN(\varepsilon) \defeq H_\PN^{\ell,-}\left(\frac{1+\varepsilon}{2}\right) - H_\PN^\ell\left(\frac{1+\varepsilon}{2}\right).
\end{equation*}
$\psi_\PN$ corresponds to $\psi$-transform introduced by \citet{bartlett2006convexity} exactly.

We will show that the statement of the lemma is satisfied by $\psi = \psi_\PN$
based on the \emph{calibration analysis}~\citep{steinwart2007compare}.
We further introduce the following notation:
\begin{align*}
    S_\SD(\alpha, \alpha', \eta, \eta')
    &\defeq \eta\eta'\indicator\{\sign(\alpha)\sign(\alpha') \ne +1\} \\
    &\quad +\eta(1-\eta')\indicator\{\sign(\alpha)\sign(\alpha') \ne -1\} \\
    &\quad +(1-\eta)\eta'\indicator\{\sign(\alpha)\sign(\alpha') \ne -1\} \\
    &\quad +(1-\eta)(1-\eta')\{\sign(\alpha)\sign(\alpha') \ne +1\},
    \\
    S_\SD^\ell(\alpha, \alpha', \eta, \eta')
    &\defeq \eta\eta'\ell(\alpha\alpha', +1) + \eta(1-\eta')\ell(\alpha\alpha, -1) \\
    &\quad + (1-\eta)\eta'\ell(\alpha\alpha, -1) + (1-\eta)(1-\eta')\ell(\alpha\alpha, +1),
    \\
    H_\SD(\eta, \eta') &\defeq \inf_{\alpha, \alpha' \in \R} S_\SD(\alpha, \alpha', \eta, \eta'), \\
    H_\SD^\ell(\eta, \eta') &\defeq \inf_{\alpha, \alpha' \in \R} S_\SD^\ell(\alpha, \alpha', \eta, \eta').
\end{align*}
$S_\SD^\ell$ represents the conditional $\ell$-risk in the following sense:
\begin{equation*}
    \E_{X,X'}[S_\SD^\ell(f(X), f(X'), p(Y=+1|X), p(Y'=+1|X'))] = R_\SD^\ell(f),
\end{equation*}
and
\begin{equation*}
    \E_{X,X'}[S_\SD(f(X), f(X'), p(Y=+1|X), p(Y'=+1|X'))] = R_\SD(\sign \circ f).
\end{equation*}
Let $\tilde\psi_\SD: [0,1] \to [0,+\infty)$ be the \emph{calibration function}~\citep[Lemma~2.16]{steinwart2007compare} defined by
\begin{align*}
    \tilde\psi_\SD(\varepsilon)
    &\defeq \inf_{\eta,\eta \in [0,1]} \inf_{\alpha, \alpha' \in \R} S_\SD^\ell(\alpha, \alpha', \eta, \eta') - H_\SD^\ell(\eta, \eta') \\
    & \hspace{100pt} \text{ s.t. }
    S_\SD(\alpha, \alpha', \eta, \eta') - H_\SD(\eta, \eta') \geq \varepsilon.
\end{align*}
By the consequence of Lemma~2.9 of \citet{steinwart2007compare},
$\tilde\psi_\SD(\varepsilon) > 0$ for all $\varepsilon > 0$ implies that $R_\SD^\ell(f) \to R_\SD^* \implies R_\SD(\sign \circ f) \to R_\SD^*$.
Further, under this condition, Theorem~2.13 of \citet{steinwart2007compare} implies that $\tilde\psi_\SD$ is non-decreasing, invertible, and satisfies
\begin{equation*}
    \tilde\psi_\SD^{\star\star}(R_\SD(\sign \circ f) - R_\SD^*) \leq R_\SD^\ell(f) - R_\SD^{\ell,*}
\end{equation*}
for any measurable function $f$.
Hence, it is sufficient to show that $\tilde\psi_\SD(\varepsilon) > 0$ for all $\varepsilon > 0$.
Indeed, $\tilde\psi_\SD = \tilde\psi_\PN$, and $\tilde\psi_\PN(\varepsilon) > 0$ for all $\varepsilon > 0$
because $\ell$ is classification-calibrated~\citep[Lemma~2]{bartlett2006convexity}.
From now on, we will see $\tilde\psi_\SD = \tilde\psi_\PN$.

First, we simplify the constraint part of $\tilde\psi_\SD$.
Since
\begin{align*}
    S_\SD(\alpha, \alpha', \eta, \eta')
    &= (1 - \eta - \eta' + 2\eta\eta')\indicator\{\sign(\alpha)\sign(\alpha') = -1\} \\
    &\quad+ (\eta + \eta' - 2\eta\eta')\indicator\{\sign(\alpha)\sign(\alpha') = +1\}
    \\
    &= \tilde\eta\indicator\{\sign(\alpha)\sign(\alpha') = +1\} + (1 - \tilde\eta)\indicator\{\sign(\alpha)\sign(\alpha') = -1\},
\end{align*}
where $\tilde\eta \defeq 1 - \eta - \eta' + 2\eta\eta'$,
we have $H_\SD(\eta, \eta') = \min\{\tilde\eta, 1 - \tilde\eta\}$.
Similarly,
\begin{align*}
    S_\SD^\ell(\alpha, \alpha', \eta, \eta')
    = \tilde\eta\ell(\alpha\alpha', +1) + (1 - \tilde\eta)\ell(\alpha\alpha', -1).
\end{align*}
With slight abuse of notation, we may write $S_\SD(\alpha, \alpha', \tilde\eta) = S_\SD(\alpha, \alpha', \eta, \eta')$ (same for $S_\SD^\ell$, $H_\SD$, and $H_\SD^\ell$).
By simple algebra, we obtain
\begin{align*}
    S_\SD(\alpha, \alpha', \tilde\eta) - H_\SD(\tilde\eta)
    &= |2\tilde\eta - 1| \cdot \indicator\{(2\tilde\eta - 1)\sign(\alpha)\sign(\alpha') \leq 0\}.
\end{align*}
Noting that $\tilde\eta$ ranges over $[0, 1]$ with $\eta, \eta' \in [0, 1]$,
we have
\begin{equation*}
\begin{split}
    \tilde\psi_\SD(\varepsilon)
    &= \inf_{\tilde\eta \in [0, 1]} \inf_{\alpha, \alpha' \in \R} S_\SD^\ell(\alpha, \alpha', \tilde\eta) - H_\SD^\ell(\tilde\eta) \\
    &\qquad \text{s.t.} \quad
    |2\tilde\eta - 1| \cdot \indicator\{(2\tilde\eta - 1)\sign(\alpha)\sign(\alpha') \leq 0\} \geq \varepsilon.
\end{split}
\end{equation*}
If $\varepsilon = 0$, $\tilde\psi_\SD(0) = 0$ and the infimum is attained by $\tilde\eta = \frac{1}{2}$ and arbitrary $\alpha$ and $\alpha'$.
If $\varepsilon > 0$, $\tilde\eta = \frac{1}{2}$ cannot satisfy the constraint.
Hence, we assume $\tilde\eta \ne \frac{1}{2}$ from now on.
When $\tilde\eta > \frac{1}{2}$, the constraint reduces to
\begin{align*}
    \left\{ \alpha\alpha' \leq 0 \wedge (\alpha, \alpha') \ne (0, 0) \right\} \vee \tilde\eta \geq \frac{1+\varepsilon}{2}.
\end{align*}
Since $S_\SD^\ell$ contains $\alpha$ and $\alpha'$ only in the form of $\alpha\alpha'$,
the infimum over $\{\alpha, \alpha' \in \R \mid \alpha\alpha' \leq 0 \wedge (\alpha, \alpha') \ne (0, 0)\}$ is equal to that over $\{\alpha, \alpha' \in \R \mid \alpha\alpha' \leq 0\}$.
If we write $\alpha\alpha' \defeq \tilde\alpha$, then
\begin{align*}
    \tilde\psi_\SD(\varepsilon)
    &= \inf_{\tilde\eta \in \left[\frac{1+\varepsilon}{2}, 1\right]} \inf_{\tilde\alpha \in \R: \tilde\alpha \leq 0} S_\SD^\ell(\alpha, \alpha', \tilde\eta) - H_\SD^\ell(\tilde\eta) \\
    &= \inf_{\tilde\eta \in \left[\frac{1+\varepsilon}{2}, 1\right]} \inf_{\tilde\alpha \in \R: \tilde\alpha \leq 0} S_\PN^\ell(\tilde\alpha, \tilde\eta) - H_\PN^\ell(\tilde\eta) \\
    &= \inf_{\tilde\alpha \in \R: \tilde\alpha \leq 0} S_\PN^\ell\left(\tilde\alpha, \frac{1+\varepsilon}{2}\right) - H_\PN^\ell\left(\frac{1+\varepsilon}{2}\right) \\
    &= H_\PN^{\ell,-}\left(\frac{1+\varepsilon}{2}\right) - H_\PN^\ell\left(\frac{1+\varepsilon}{2}\right) \\
    &= \tilde\psi_\PN(\varepsilon).
\end{align*}
When $\tilde\eta < \frac{1}{2}$, $\tilde\psi_\SD = \tilde\psi_\PN$ can be shown in the same way.
Hence, the statement is proven.
\qed

\subsection{Proof of Lemma~\ref{lemma:est_clustering_error_minimization}}
We start by introducing the following statement.
\begin{lemma}
\label{lemma:alpha_beta}
For real values $\alpha$ and $\beta$ satisfying $0 \leq \alpha \leq \beta \leq 1$,
we have
\begin{equation}
    \sqrt{\beta} - \sqrt{\alpha} \leq \sqrt{\beta - \alpha}.
\end{equation}
\end{lemma}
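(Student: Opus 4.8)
The plan is to reduce the claim to an elementary algebraic inequality by squaring both sides, a step whose validity rests entirely on both sides being nonnegative. First I would record the two nonnegativity facts. Since $\alpha \le \beta$ and the square root is monotone on $[0, \infty)$, we have $\sqrt{\beta} \ge \sqrt{\alpha}$, so the left-hand side $\sqrt{\beta} - \sqrt{\alpha} \ge 0$; and the right-hand side $\sqrt{\beta - \alpha}$ is well-defined and nonnegative because $\beta - \alpha \ge 0$. With both sides nonnegative, the target inequality $\sqrt{\beta} - \sqrt{\alpha} \le \sqrt{\beta - \alpha}$ is equivalent to the inequality obtained after squaring.

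Next I would carry out the squaring. The left-hand side squares to $(\sqrt{\beta} - \sqrt{\alpha})^2 = \alpha + \beta - 2\sqrt{\alpha\beta}$, while the right-hand side squares to $\beta - \alpha$. Thus the lemma is equivalent to $\alpha + \beta - 2\sqrt{\alpha\beta} \le \beta - \alpha$, which rearranges to the single reduced inequality $\alpha \le \sqrt{\alpha\beta}$.

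Finally I would verify $\alpha \le \sqrt{\alpha\beta}$ directly. Since $\alpha \ge 0$ we may write $\alpha = \sqrt{\alpha^2}$, and from $\alpha \le \beta$ together with $\alpha \ge 0$ we get $\alpha^2 = \alpha \cdot \alpha \le \alpha \cdot \beta$; monotonicity of the square root then yields $\sqrt{\alpha^2} \le \sqrt{\alpha\beta}$, i.e. $\alpha \le \sqrt{\alpha\beta}$, which is exactly the reduced inequality. This closes the argument. I expect essentially no obstacle here: the proof is elementary, and the only point deserving care is the justification that squaring preserves the direction of the inequality, which is supplied by the nonnegativity of both sides established at the outset. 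I would also note in passing that the hypothesis $\beta \le 1$ plays no role in the argument and is presumably stated only to match the range in which Lemma~\ref{lemma:alpha_beta} is applied.
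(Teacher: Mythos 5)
Your proof is correct and takes essentially the same route as the paper: both arguments compare the squares of the two sides and reduce the claim to the inequality $\alpha \leq \sqrt{\alpha\beta}$ (the paper writes this as the factorization $2\sqrt{\alpha}(\sqrt{\beta}-\sqrt{\alpha}) \geq 0$). If anything, you are slightly more careful than the paper, which leaves implicit the nonnegativity of $\sqrt{\beta}-\sqrt{\alpha}$ needed to pass from the squared inequality back to the original one.
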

\begin{proof}
\begin{equation}
    \begin{split}
        (\beta - \alpha) - (\sqrt{\beta} - \sqrt{\alpha})^2
        = 2 \sqrt{\alpha \beta}  -2 \alpha
        = 2\sqrt{\alpha} \left(\sqrt{\beta} - \sqrt{\alpha} \right)
        \geq 0,
    \end{split}
\end{equation}
Thus we have $(\beta - \alpha) \geq (\sqrt{\beta} - \sqrt{\alpha})^2$, which completes the proof of Lemma~\ref{lemma:alpha_beta}.
\end{proof}

With this lemma,
an excess risk on clustering error can be connected with that on pairwise classification error as follows.
From the equation in Eq.~~\eqref{eq:relationship},
we have
\begin{equation}
    R_\CE^* = \frac{1}{2} - \frac{\sqrt{1-2  R_\SD^*}}{2}.
\end{equation}
Thus,
we can bound excess risk on the clustering error as follows.
\begin{equation}
\begin{split}
\label{eq:ce_sd_bound}
    R_{\CE}(\sign \circ \hat{f}) - R_{\CE}^*
    & = \left(\frac{1}{2} - \frac{\sqrt{1-2R_\SD(\sign \circ \hat{f})}}{2}\right) - \left(\frac{1}{2} - \frac{\sqrt{1-2R_\SD^*}}{2}\right) \\
    & = \frac{1}{2} \left\{ \sqrt{1-2R_\SD^*} - \sqrt{1-2R_\SD(\sign \circ \hat{f})} \right\}\\
    & \leq \sqrt{\frac{{{R_\SD(\sign \circ \hat{f}) - R_\SD^*}}}{2}} \\
    & \leq \sqrt{\frac{1}{2} \psi^{-1} \left(R_\SD^\ell (\hat{f}) - R_\SD^{\ell,*} \right) },
\end{split}
\end{equation}
where Lemma~\ref{lemma:alpha_beta} and Lemma~\ref{lemma:surrogate_calibration} were applied to obtain the penultimate and the last inequalities, respectively.
The excess risk with respect to pairwise surrogate risk,
i.e., $R_\SD^\ell (\hat{f}) - R_\SD^{\ell,*}$,
can be decomposed into \emph{approximation error} and \emph{estimation error} as
\begin{equation}
\label{eq:bias_variance_decomposition}
    R_\SD^\ell (\hat{f}) - R_\SD^{\ell*}
    = \underbrace{R_\SD^\ell (f^*) - R_\SD^{\ell*}}_{\text{approximation error}} + \underbrace{R_\SD^\ell (\hat{f}) - R_\SD^\ell (f^*)}_{\text{estimation error}},
\end{equation}
where $f^*$ is the minimizer of $R_\SD^\ell(f)$ in a specified function space $\mathcal{F}$.
Now,
we provide the following upper bound for the estimation error with the Rademacher complexity.
\begin{lemma}
\label{lemma:est_surrogate_sd_risk}
Let $f^* \in \mathcal{F}$ be a minimizer of $R_\SD^\ell$,
and $\hat{f} \in \mathcal{F}$ be a minimizer of
the empirical risk $\widehat{R}_\SD^\ell$.
Assume that the loss function $\ell$ is $\rho$-Lipschitz function with respect to the first argument ($0 < \rho < \infty$),
and all functions in the model class $\mathcal{F}$ are bounded,
i.e., there exists an constant $C_b$ such that $\| f \|_\infty \leq C_b$ for any $f \in \mathcal{F}$.
Let $C_\ell \defeq \sup_{t \in \{\pm1 \}} \ell(C_b^2, t)$.
For any $\delta > 0$, with probability at least $1-\delta$,
\begin{equation}
\label{eq:estimation_error_bound}
\begin{split}
     R_\SD^\ell(\hat{f}) - R_\SD^\ell(f^*)
    & \leq 4\rho \mathfrak{R}_{m_1}(\mathcal{F}) + \sqrt{\frac{2 C_\ell^2 \log \frac{2}{\delta}}{m_1}}.
\end{split}
\end{equation}
\end{lemma}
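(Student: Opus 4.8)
The plan is to prove this as a standard uniform-convergence bound via symmetrization and contraction, with the two steps tailored to the product structure $f(\x)\cdot f(\x')$. First I would exploit the optimality of the empirical minimizer $\widehat f$: writing
\[
R_\SD^\ell(\widehat f) - R_\SD^\ell(f^*)
= \underbrace{[R_\SD^\ell(\widehat f) - \widehat R_\SD^\ell(\widehat f)]}_{(\mathrm{I})}
+ \underbrace{[\widehat R_\SD^\ell(\widehat f) - \widehat R_\SD^\ell(f^*)]}_{(\mathrm{II})}
+ \underbrace{[\widehat R_\SD^\ell(f^*) - R_\SD^\ell(f^*)]}_{(\mathrm{III})},
\]
the middle term $(\mathrm{II})$ is nonpositive because $\widehat f$ minimizes $\widehat R_\SD^\ell$, while $(\mathrm{I})$ and $(\mathrm{III})$ are dominated respectively by the one-sided suprema $\sup_{f\in\mathcal F}[R_\SD^\ell(f) - \widehat R_\SD^\ell(f)]$ and $\sup_{f\in\mathcal F}[\widehat R_\SD^\ell(f) - R_\SD^\ell(f)]$. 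It then suffices to bound each supremum by $2\rho\,\mathfrak{R}_{m_1}(\mathcal F) + \sqrt{C_\ell^2\log(2/\delta)/(2m_1)}$ with probability at least $1-\delta/2$ and apply a union bound.

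Next I would control each supremum by McDiarmid's bounded-difference inequality. The quantitative input is the boundedness assumption: $\|f\|_\infty \le C_b$ forces the argument $f(\x)f(\x')$ of the loss into $[-C_b^2, C_b^2]$, so that $0 \le \ell(f(\x)f(\x'), t) \le C_\ell$ by the definition of $C_\ell$; replacing a single triplet therefore perturbs the empirical risk, and hence either supremum, by at most $C_\ell/m_1$. McDiarmid with $\sum_i (C_\ell/m_1)^2 = C_\ell^2/m_1$ then yields a deviation of at most $\sqrt{C_\ell^2\log(2/\delta)/(2m_1)}$ above the expectation, with probability at least $1-\delta/2$.

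It remains to bound the expected suprema. I would apply the standard symmetrization lemma to introduce Rademacher variables $\{\sigma_i\}$, incurring a factor $2$, and then the Ledoux--Talagrand contraction inequality. Conditioning on the sample, each map $u \mapsto \ell(u, T_i)$ is $\rho$-Lipschitz by assumption, so contraction strips the loss at the cost of $\rho$ and leaves exactly
\[
\E\Big[\sup_{f \in \mathcal F} \tfrac{1}{m_1}\textstyle\sum_{i=1}^{m_1}\sigma_i\, f(X_i) f(X_i')\Big]
= \mathfrak{R}_{m_1}(\mathcal F),
\]
the Rademacher complexity the paper defines on the product class $\{(\x,\x')\mapsto f(\x)f(\x')\}$. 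Thus each expected supremum is at most $2\rho\,\mathfrak{R}_{m_1}(\mathcal F)$. Summing the two supremum bounds under the union bound gives $4\rho\,\mathfrak{R}_{m_1}(\mathcal F) + 2\sqrt{C_\ell^2\log(2/\delta)/(2m_1)}$, and the last term simplifies to $\sqrt{2 C_\ell^2 \log(2/\delta)/m_1}$, which is the claimed inequality.

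The step I expect to require the most care is the contraction: since the hypothesis enters the loss through the \emph{product} $f(\x)f(\x')$ rather than a single evaluation, I must regard $g_f(\x,\x') \defeq f(\x)f(\x')$ as the base function, so that the contraction principle is applied to the one-argument $\rho$-Lipschitz map $\ell(\cdot, T_i)$, and verify that the residual complexity coincides with $\mathfrak{R}_{m_1}(\mathcal F)$ as defined in the excerpt rather than the Rademacher complexity of $\mathcal F$ itself. This is the ``special care'' for the product of prediction functions; the only other essential use of the assumptions is the boundedness $\|f\|_\infty \le C_b$, which supplies the loss range $C_\ell$ needed for McDiarmid.
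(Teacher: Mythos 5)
Your proposal is correct and follows essentially the same route as the paper: the same empirical-optimality decomposition (the paper's factor-two bound on $2\sup_{f}|R_\SD^\ell(f)-\hat{R}_\SD^\ell(f)|$ is just your two one-sided suprema combined), followed by the standard uniform deviation bound and Talagrand's contraction applied to the product class $\{(\x,\x')\mapsto f(\x)f(\x')\}$, yielding identical constants. The only cosmetic difference is that you inline the McDiarmid-plus-symmetrization argument with a $\delta/2$ union bound, whereas the paper cites the packaged two-sided Rademacher bound of \citet[Theorem~3.1]{mohri2018foundations}, which is proven in exactly that way.
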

\begin{proof}
The estimation error can be bounded as
\begin{equation}
\label{eq:estimation_to_generalization}
\begin{split}
     R_\SD^\ell(\hat{f}) - R_\SD^\ell(f^*)
    &  \leq \left( R_\SD^\ell(\hat{f}) - \hat{R}_\SD^\ell(\hat{f}) \right)+ \left( \hat{R}_\SD^\ell(f^*) - R_\SD^\ell(f^*) \right) \\
    & \leq 2 \sup_{f \in \mathcal{F}} \left| R_\SD^\ell(\hat{f}) - \hat{R}_\SD^\ell(\hat{f}) \right|.
\end{split}
\end{equation}
With the Rademacher complexity,
the following inequalities hold with probability at least $1-\delta$.
\begin{equation}
\label{eq:uniform_deviation}
    \left| R_\SD^\ell(\hat{f}) - \hat{R}_\SD^\ell(\hat{f}) \right| \leq 2 \mathfrak{R}_{m_1}(\ell \circ \mathcal{F}) + \sqrt{\frac{C_\ell^2 \log \frac{2}{\delta}}{2m_1}},
\end{equation}
where $\ell \circ \mathcal{F}$ indicates a class of composite functions defined by $\{\ell \circ f \mid f \in \mathcal{F} \}$.
By applying Talagrand's lemma,
the Rademacher complexity of $\ell \circ \mathcal{F}$ can be bounded as
\begin{equation}
\label{eq:talagrand_lemma}
    \mathfrak{R}_{m_1}(\ell \circ \mathcal{F}) \leq \rho \mathfrak{R}_{m_1}(\mathcal{F}).
\end{equation}
The proofs of Eqs.~\eqref{eq:uniform_deviation} and \eqref{eq:talagrand_lemma} can be found in \citet[Theorem~3.1 and Lemma~4.2]{mohri2018foundations}, respectively.
By plugging Eqs.~\eqref{eq:uniform_deviation} and \eqref{eq:talagrand_lemma} into Eq.~\eqref{eq:estimation_to_generalization},
we obtain the result in Eq.~\eqref{eq:estimation_error_bound}.
\end{proof}
By combining Eqs.~\eqref{eq:ce_sd_bound}, \eqref{eq:bias_variance_decomposition} and Lemma~\ref{lemma:est_surrogate_sd_risk},
we obtain the following inequality with probability at least $1-\delta$,
\begin{equation}
    R_\CE (\sign \circ \hat{f}) - R_\CE^* \leq \sqrt{\frac{1}{2} \psi^{-1}\left (
    R_\SD^\ell(f^*) - R_\SD^{\ell,*} +
    4\rho \mathfrak{R}_{m_1}(\mathcal{F}) + \sqrt{\frac{2 C_\ell^2 \log \frac{2}{\delta}}{m_1}} \right)}.
\end{equation}
\qed

\subsection{Proof of Lemma~\ref{lemma:est_class_assingment}}
We first derive a sufficient condition for the proposed class assignment fails.
Let $\hat{s}$ be a estimated class assignment for a given hypothesis $h: \mathcal{X} \rightarrow \mathcal{Y}$.
\begin{equation}
\begin{split}
    \mathrm{Pr}\left( \hat{s} \neq \argmin_{s \in \{\pm 1\}} R_\PN(s \cdot h)  \right)
    & = \mathrm{Pr}\left(\sign \left(1- 2\hat{Q}(h) \right)  \neq \sign \left(1 - 2 Q(h) \right)  \right) \\
    & =
    \begin{cases}
    \mathrm{Pr} \left( 2 \hat{Q}(h) - 1 > 0 \right) & (1 - 2 Q(h) > 0 ), \\
    \mathrm{Pr} \left( 2 \hat{Q}(h) - 1 \leq 0 \right) & (\text{otherwise})
    \end{cases} \\
    & =
    \begin{cases}
    \mathrm{Pr} \left(\hat{Q}(h) - Q(h) > \frac{1}{2} - Q(h) \right) & (1 - 2 Q(h) > 0 ), \\
    \mathrm{Pr} \left( Q(h) - \hat{Q}(h) \geq Q(h) - \frac{1}{2} \right) & (\text{otherwise})
    \end{cases}
\end{split}
\end{equation}
By applying Hoeffding's inequality~\cite{hoeffding1963probability},
we obtain the following bounds.
\begin{gather}
    \mathrm{Pr} \left(\hat{Q}(h) - Q(h) > \frac{1}{2} - Q(h) \right) \leq \exp \left( -2 m_2 \left(Q(h) - \frac{1}{2} \right)^2 \right), \\
    \mathrm{Pr} \left({Q}(h) - \hat{Q}(h) \geq Q(h) - \frac{1}{2} \right) \leq \exp \left( -2 m_2 \left(Q(h) - \frac{1}{2} \right)^2 \right),
\end{gather}
where $m_2$ is the number of pairwise examples to compute $\hat{Q}(h)$.
Therefore,
we can bound the error probability of the proposed class assignment method regardless of the value of $Q(h)$ as
\begin{equation}
\label{eq:class_assingment_error_bound}
    \mathrm{Pr}\left( \hat{s} \neq \argmin_{s \in \{\pm 1\}} R_\PN(s \cdot h)  \right) \leq \exp \left( -2 m_2 \left(Q(h) - \frac{1}{2} \right)^2 \right).
\end{equation}
Now,
we further explore how the term $Q(h) - \frac{1}{2}$ can be expressed.
From the definition of $Q$ and the equivalent risk expression in Eq.~\eqref{eq:sd_risk},
we have
\begin{equation}
    Q(h) = (2 \pi_+ - 1) R_\PN(h) + 1- \pi_+.
\end{equation}
Therefore,
\begin{equation}
\label{eq:Q_minus_half}
    Q(h) - \frac{1}{2} = (2 \pi_+ - 1) \left(R_\PN(h) - \frac{1}{2} \right).
\end{equation}
By plugging Eq.~\eqref{eq:Q_minus_half} into Eq.~\eqref{eq:class_assingment_error_bound},
we finally obtain
\begin{equation}
    \mathrm{Pr}\left( \hat{s} \neq \argmin_{s \in \{\pm 1\}} R_\PN(s \cdot h)  \right) \leq \exp \left( -\frac{m_2}{2} (2 \pi_+ - 1)^2 \left(2 R_\PN(h) - 1 \right)^2 \right),
\end{equation}
which completes the proof of Lemma~\ref{lemma:est_class_assingment}.
\qed

\section{Discussion on Class Assignment}
\label{subsec:class-assignment}

In this section, we discuss the impossibility of recovering the class assignment
only with unlabeled validation data.
Given a real-valued prediction function $f: \mathcal{X} \to \R$ and the class prior $\pi_+$,
we consider the following class assignment strategy instead of the proposed method:
\begin{equation}
    \tilde{s} \defeq \sign \left(2 \pi_+ - 1 \right) \cdot \sign \left( \E_{X \sim p(\x)} \left[ \sign\left( f(X) \right) \right]\right).
\end{equation}
Our aim is to estimate the optimal class assignment $s^* = \argmin_{s \in \{\pm 1\}} R_\PN \left(s \cdot \sign \circ f\right)$,
which can be expressed by
\begin{equation}
\begin{split}
    s^*
    & = \sign \left( R_\PN \left(- \sign \circ  f \right) - R_\PN \left(\sign \circ  f \right)  \right) \\
    & = \sign \left( (1- R_\PN \left(\sign \circ  f \right)) - R_\PN \left(\sign \circ  f \right)  \right) \\
    & = \sign \left(1 - 2 R_\PN (\sign \circ f)  \right).
\end{split}
\end{equation}
Thus, the following condition is necessary and sufficient for $\tilde s = s^*$:
\begin{equation}
\label{eq:supp:class_assignment_iff_condition}
\begin{split}
    \underbrace{\sign \left(2 \pi_+ - 1 \right) \cdot \sign \left( \E_{X \sim p(\x)} \left[ \sign\left( f(X) \right) \right]\right)}_{= \tilde s} \cdot \underbrace{\sign \left(1 - 2 R_\PN (\sign \circ f) \right)}_{= s^*} > 0.
\end{split}
\end{equation}
We will investigate whether this condition always holds or not.
Denote
\begin{gather}
    R^+_\PN \defeq \pi_+ \E_{X \sim p(\x \mid y=+1)} \left[\indicator \left\{ \sign\left(f(X)\right) \neq +1 \right\} \right], \\
    R^-_\PN \defeq (1-\pi_+) \E_{X \sim p(\x \mid y=-1)} \left[\indicator \left\{ \sign\left(f(X)\right) \neq -1 \right\} \right].
\end{gather}
Note that $0 \leq R^+_\PN \leq \pi_+$ and $0 \leq R^-_\PN \leq 1-\pi_+$ always hold.
Now, we have
\begin{equation}
    \begin{split}
        \E_{X \sim p(\x)} \left[\indicator\left\{ \sign\left(f(X)\right) = +1 \right\}\right]
        &= \pi_+ \E_{X \sim p(\x \mid y=+1)}\left[\indicator\left\{ \sign\left(f(X)\right) = +1 \right\}\right] \\
          & \qquad + (1-\pi_+) \E_{X \sim p(\x \mid y=-1)}\left[\indicator\left\{ \sign\left(f(X)\right) = +1 \right\}\right] \\
        &= \pi_+ \E_{X \sim p(\x \mid y=+1)}\left[\left(1 - \indicator\left\{ \sign\left(f(X)\right) = -1 \right\}\right)\right] \\
          & \qquad + (1-\pi_+) \E_{X \sim p(\x \mid y=-1)}\left[\indicator\left\{ \sign\left(f(X)\right) = +1 \right\}\right] \\
        &= -\pi_+ \E_{X \sim p(\x \mid y=+1)}\left[\indicator\left\{ \sign\left(f(X)\right) = -1 \right\}\right] \\
          & \qquad + (1-\pi_+) \E_{X \sim p(\x \mid y=-1)}\left[\indicator\left\{ \sign\left(f(X)\right) = +1 \right\}\right] + \pi_+ \\
        &= -R_\PN^+ + R_\PN^- + \pi_+.
    \end{split}
\end{equation}
Similarly, we have
\begin{equation}
    \E_{X \sim p(\x)} \left[\indicator\left\{ \sign\left(f(X)\right) = -1 \right\}\right]
    = R_\PN^+ - R_\PN^- + 1 - \pi_+.
\end{equation}
By combining them, the following expression can be obtained.
\begin{equation}
    \begin{split}
    \E_{X \sim p(\x)} \left[ \sign\left( f(X) \right) \right]
    & = \E_{X \sim p(\x)} \left[ \indicator \left\{ \sign\left(f(X)\right) = +1 \right\} \right] - \E_{X \sim p(\x)} \left[ \indicator \left\{ \sign\left(f(X)\right) = -1 \right\} \right] \\
    & = -2 R^+_\PN + 2 R^-_\PN+ 2 \pi_+ - 1.
    \end{split}
\end{equation}
Hence, the necessary and sufficient condition \eqref{eq:supp:class_assignment_iff_condition} is rewritten as
\begin{equation}
    \label{eq:supp:class_assignment_iff_condition:1}
    \sign \left(2 \pi_+ - 1 \right) \cdot \sign \left( - 2R^+_\PN + 2R^-_\PN+ 2\pi_+ - 1 \right)
    \cdot \sign \left(1 - 2 R^+_\PN - 2 R^-_\PN \right) > 0.
\end{equation}
This condition is satisfied when $\pi_+$, $R_\PN^+$, and $R_\PN^-$ satisfy any of the following conditions.
\begin{itemize}
    \item $\pi_+ \geq \frac{1}{2}$, $R^-_\PN\geq R^+_\PN + \frac{1}{2} - \pi_+$, and $R^-_\PN\leq -R^+_\PN + \frac{1}{2}$,
    \item $\pi_+ \geq \frac{1}{2}$, $R^-_\PN< R^+_\PN + \frac{1}{2} - \pi_+$, and $R^-_\PN> -R^+_\PN + \frac{1}{2}$,
    \item $\pi_+ < \frac{1}{2}$, $R^-_\PN\geq R^+_\PN + \frac{1}{2} - \pi_+$, and $R^-_\PN> -R^+_\PN + \frac{1}{2}$,
    \item $\pi_+ < \frac{1}{2}$, $R^-_\PN< R^+_\PN + \frac{1}{2} - \pi_+$, and $R^-_\PN\leq -R^+_\PN + \frac{1}{2}$.
\end{itemize}

\begin{figure}[ht]
  \centering
  \subfigure[$\pi_+ > \frac{1}{2}$]{\includegraphics[width=0.3\linewidth]{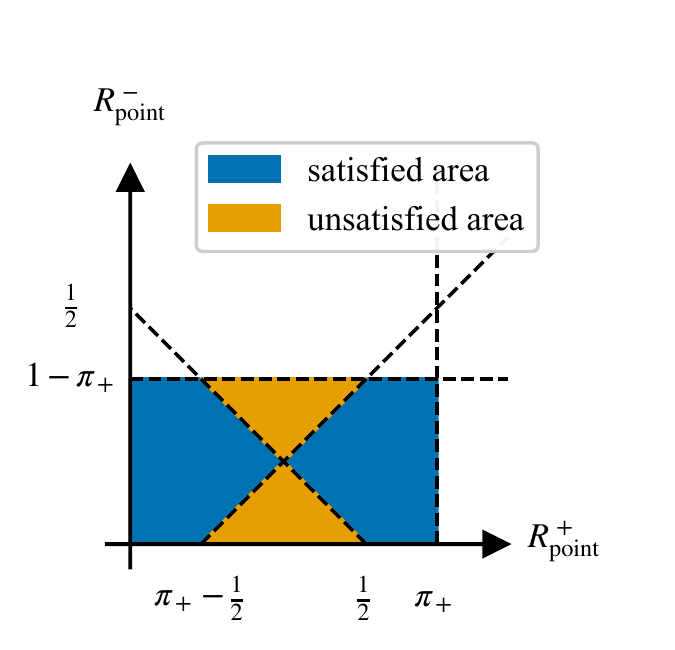}}
  \subfigure[$\pi_+ < \frac{1}{2}$]{\includegraphics[width=0.3\linewidth]{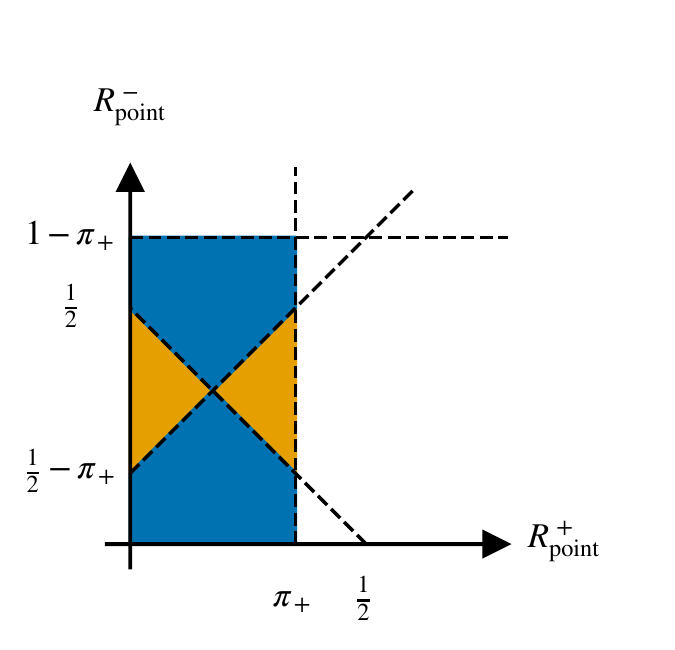}}
  \caption{
    Illustration of the areas corresponding to the condition \eqref{eq:supp:class_assignment_iff_condition:1} (highlighted with blue).
    We have $\tilde s = s^*$ in the blue areas and otherwise in the orange areas.
  }
  \label{fig:supp:success_of_class_assignment}
\end{figure}

The conditions \eqref{eq:supp:class_assignment_iff_condition:1} are depicted in Figure~\ref{fig:supp:success_of_class_assignment}.
As can be seen from this figure, for any binary classification problem (i.e., for any $\pi_+$),
there exists a case where the class assignment with unlabeled data fails ($\tilde s \ne s^*$).

\section{Extension to semi-supervised learning}
\label{subsec:additional-supervision}
In real-world applications,
we may face the situation where a large amount of unlabeled data are available along with pairwise data.
Similarly to existing weakly-supervised classification frameworks such as positive-negative-unlabeled classification~\citep{sakai2017semi} and similar-dissimilar-unlabeled classification~\citep{shimada2019classification},
we can easily incorporate unlabeled data for the estimation of $R_\SD^\ell$.
\begin{theorem}
For non-negative real values $(\gamma_1, \gamma_2, \gamma_3)$ that satisfies $\gamma_1 + \gamma_2 + \gamma_3 = 1$,
the risk $R_\SD^\ell(f)$ can be equivalently expressed as:
\begin{equation}
\label{eq:risk_semi}
\begin{split}
    &(\pi_+^2 + \pi_-^2) \E_{(X, X') \sim p(\x, \x' \mid y y' = +1)} \left[ (\gamma_1 + \gamma_2) \ell(f(X)f(X'), +1) - \gamma_2 \ell(f(X)f(X'), -1) \right] \\
    & + 2\pi_+ \pi_- \E_{(X, X') \sim p(\x, \x' \mid y y' = -1)} \left[ (\gamma_1 + \gamma_3) \ell(f(X)f(X'), -1) - \gamma_3 \ell(f(X)f(X'), +1) \right] \\
    & + \E_{(X, X') \sim p(\x, \x')} \left[ \gamma_3 \ell(f(X)f(X'), +1) + \gamma_2 \ell(f(X)f(X'), -1) \right],
\end{split}
\end{equation}
where $\pi_+$ and $\pi_-$ denote positive and negative class proportions, respectively.
\end{theorem}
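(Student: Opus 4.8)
The plan is to show that the $\gamma$-parametrized expression collapses to the ordinary (fully pairwise-supervised) form of $R_\SD^\ell$, using two complementary decompositions of the pair distribution together with the constraint $\gamma_1 + \gamma_2 + \gamma_3 = 1$. Throughout, abbreviate $g \defeq f(X)f(X')$, and write $\E_+[\cdot]$, $\E_-[\cdot]$, and $\E[\cdot]$ for the expectations over $p(\x, \x' \mid yy' = +1)$, $p(\x, \x' \mid yy' = -1)$, and $p(\x, \x')$, respectively.

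First, I would rewrite the target risk by conditioning on the similarity label $T$. From the generative model, a random pair is similar with probability $p(yy' = +1) = \pi_+^2 + \pi_-^2$ and dissimilar with probability $p(yy' = -1) = 2\pi_+\pi_-$, and the associated pair-conditional densities are mixtures of class-conditional products. Marginalizing $T$ out then yields the clean form
\[
R_\SD^\ell(f) = (\pi_+^2 + \pi_-^2)\,\E_+[\ell(g, +1)] + 2\pi_+\pi_-\,\E_-[\ell(g, -1)],
\]
which is the quantity I want to recover.

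Second, I would record the unlabeled decomposition. Since $p(\x, \x')$ splits over $T$ by the law of total probability,
\[
p(\x, \x') = (\pi_+^2 + \pi_-^2)\,p(\x, \x' \mid yy' = +1) + 2\pi_+\pi_-\,p(\x, \x' \mid yy' = -1),
\]
so that $\E[\ell(g, +1)] = (\pi_+^2 + \pi_-^2)\,\E_+[\ell(g, +1)] + 2\pi_+\pi_-\,\E_-[\ell(g, +1)]$, and analogously for $\ell(g, -1)$. I would substitute these two identities into the third (unlabeled) term of the claimed formula.

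Finally, I would collect the resulting terms according to the four elementary expectations $\E_+[\ell(g, +1)]$, $\E_+[\ell(g, -1)]$, $\E_-[\ell(g, +1)]$, and $\E_-[\ell(g, -1)]$. The two cross terms vanish, since the coefficient of $(\pi_+^2 + \pi_-^2)\,\E_+[\ell(g, -1)]$ is $-\gamma_2 + \gamma_2 = 0$ and that of $2\pi_+\pi_-\,\E_-[\ell(g, +1)]$ is $-\gamma_3 + \gamma_3 = 0$, whereas each diagonal term acquires the coefficient $\gamma_1 + \gamma_2 + \gamma_3 = 1$. The sum is therefore exactly the clean risk from the first step, proving the claim. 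I expect no genuine obstacle here: this is a routine unbiased-risk rewriting in the spirit of \citet{shimada2019classification}, and the only point needing care is reading the mixture structure of the pair-conditional densities off the generative model and keeping the four-term bookkeeping organized so that the cancellations are transparent.
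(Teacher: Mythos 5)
Your proposal is correct. The paper states this theorem in its appendix without any proof, so there is nothing to compare against; your argument supplies the missing details in what is evidently the intended way. Concretely, your two ingredients both check out: the conditional pair densities are indeed the mixtures $p(\x,\x' \mid yy'=+1) = \bigl(\pi_+^2 p_+(\x)p_+(\x') + \pi_-^2 p_-(\x)p_-(\x')\bigr)/(\pi_+^2+\pi_-^2)$ and $p(\x,\x' \mid yy'=-1) = \bigl(p_+(\x)p_-(\x') + p_-(\x)p_+(\x')\bigr)/2$, which give both the clean form $R_\SD^\ell(f) = (\pi_+^2+\pi_-^2)\E_+[\ell(g,+1)] + 2\pi_+\pi_-\E_-[\ell(g,-1)]$ and the marginal decomposition $p(\x,\x') = (\pi_+^2+\pi_-^2)\,p(\x,\x'\mid yy'=+1) + 2\pi_+\pi_-\,p(\x,\x'\mid yy'=-1)$; substituting the latter into the unlabeled term and collecting the four elementary expectations yields coefficients $\gamma_1+\gamma_2+\gamma_3=1$ on the two diagonal terms and $0$ on the two cross terms, exactly as you state. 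Note also that the non-negativity of the $\gamma_i$ plays no role in the identity itself (it matters only for downstream estimation considerations), so your proof in fact establishes the statement for any reals summing to one.
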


With the expression in Eq.~\eqref{eq:risk_semi},
we can use both pairwise supervision and unlabeled data for the empirical estimation of $R_\SD^\ell$.
As well as the similar-unlabeled classification method~\citep{bao2018classification},
our method can be applied with only similar-unlabeled (or dissimilar-unlabeled) data by controlling parameters $(\gamma_1, \gamma_2, \gamma_3)$.

\section{Training with linear model and unhinged Loss}
In general, the optimization problem in Eq.~\eqref{eq:f_hat} is non-convex.
Thus, it is not guaranteed whether we can achieve global optima with gradient descent.
However,
with specific model class and loss function,
we can obtain an optimal solution more efficiently.
Consider the linear model $f(\x)=\w^\top \x$, where $\w \in \R^d$ are parameters.
As a loss function,
we consider the unhinged loss $\ell_{\rm UH}(z, t) \defeq 1 - tz$.
This loss function is originally proposed in \citet{van2015learning} to cope with label noises.
Here we reformulate the optimization problem with linear model and the unhinged loss as follows.
\begin{equation}
\label{eq:opt_unhinge}
    \hat{\w} = \argmin_{\w} \hat{R}_\SD^{\ell_{\rm UH}}(\w), \text{~~~s.t.~~~} \| \w \| = 1,
\end{equation}
where
\begin{equation}
\begin{split}
\hat{R}_\SD^{\ell_{\rm UH}} (\w)
    & \defeq \frac{1}{m} \sum_{(X, X', T) \in \mathfrak{D}_1} \left(1 - T \, \w^\top X  \cdot \w^\top X' \right) \\
    & = 1 - \w^\top \left( \frac{1}{m} \sum_{(X, X', T) \in \mathfrak{D}_1} T X {X'}^\top \right) \w \\
    & = 1 - \w^\top \left( \frac{1}{2m} \sum_{(X, X', T) \in \mathfrak{D}_1} T \left(X {X'}^\top + X' {X}^\top \right) \right) \w \\
    & = 1 - \w^\top M \w,
\end{split}
\end{equation}
where $M$ denotes the Hermitian matrix $\frac{1}{2m} \sum_{(X, X', T) \in \mathfrak{D}_1} T \left(X {X'}^\top + X' {X}^\top \right)$.
The constraint $\| \w \| = 1$ is necessary to prevent the objective function from divergence.
Let $\lambda_1, \ldots, \lambda_d$ be eigenvalues of the matrix $M$ that satisfies $\lambda_1 \geq \dots \geq \lambda_d$,
and $\boldsymbol{v}_1,\ldots,\boldsymbol{v}_d$ be corresponding eigenvectors
that satisfy $\|\boldsymbol{v}_i \| = 1$ for all $i \in \{1, \ldots, d \}$.
The following statement is known as a property of \emph{Rayleigh quotient}~\citep{horn2012matrix}.
\begin{equation}
 \boldsymbol{v}_1 = \argmax_{\w \in \R^d, \| \w \| = 1} \w^\top M \w .
\end{equation}
Thus,
the analytical solution of the constrained optimization problem in Eq.~\eqref{eq:opt_unhinge} is obtained as
\begin{equation}
    \hat{\w} = \argmin_{\w \in \R^d, \|\w \| = 1} \hat{R}_\SD^{\ell_{\rm UH}}(\w)
    = \argmax_{\w \in \R^d, \|\w \| = 1} \w^\top M \w = \boldsymbol{v}_1.
\end{equation}

\section{Full version of experimental results}
\label{sec:experiments-full}
In this section,
we show the implementation details and the full versions of experimental results in Section~\ref{sec:experiments},
which were omitted in the main body due to the limited space.

\paragraph{implementation details (clustering error minimization on benchmark datasets).}
The implementation details of our method (CIPS) and each baseline were as follows.
\begin{itemize}
    \setlength{\leftskip}{-10pt}
    \item CIPS (Ours):
    The empirical pairwise classification risk $R_\SD$ \eqref{eq:f_hat} was computed with the logistic loss.
    The linear model $f(\x) = \w^\top\x + b$ was used.
    The risk was optimized with the stochastic gradient descent (minibatch size: $\num{64}$ / learning rate: $10^{-2}$ / $\ell_2$-regularization parameter: $10^{-4}$ / training epochs: $\num{500}$).

    \item MCL~\citep{hsu2018multi}:
    The loss function is based on the maximum likelihood, that is, the logistic loss as in the original paper.
    The model and optimization setup were the same as CIPS.

    \item SD~\citep{shimada2019classification}:
    Their proposed classification risk was computed with the logistic loss.
    The model and optimization setup were the same as CIPS.

    \item OVPC~\citep{zhang2007value}:
    We followed the authors to use the squared loss and the closed-form minimizer was evaluated.

    \item SSP~\citep{von2007tutorial}:
    Pairwise data were used as hard constraints.
    In order to construct the neighborhood sets for the Laplacian matrix, $\num{5}$-nearest neighbors were used.
    The features are obtained by constraints propagation.
    In order to perform the final $k$-means clustering on the obtained features,
    scikit-learn implementation~\citep{scikit-learn} was used with the default parameters.

    \item CKM~\citep{wagstaff2001constrained}:
    Pairwise data were used as hard constraints.
    Clustering was carried out with $\num{10}$ different random initializations and the best one was reported.
    For each initialization, the number of maximum iterations was set to $\num{300}$ and the tolerance parameter was set to $10^{-4}$.

    \item KM~\citep{macqueen1967some}:
    Pairwise data were used for training without all link information.
    Scikit-learn implementation~\citep{scikit-learn} of $k$-means clustering was used with the default parameters.

    \item SV (Supervised):
    The true class labels were revealed during training.
    The model and optimization setup were the same as CIPS.
\end{itemize}

\paragraph{Implementation details (clustering error minimization on a real-world dataset).}
Pubmed-Diabetes dataset is a citation network dataset
consists of \num{19717} nodes representing scientific publications related to diabetes
and \num{44338} (directed) edges representing citing relationships.
Each node is described by \num{500}-dimensional TF/IDF features,
and categorized into three classes,
among which we pick class 1 (``Diabetes Mellitus, Experimental'') and 3 (``Diabetes Mellitus Type 2'') to convert it into a binary-labeled dataset.

The implementation details of our method and the baselines were as follows.
\begin{itemize}
    \setlength{\leftskip}{-10pt}
    \item CIPS (Ours):
    The \num{4}-layer perceptron (\num{500}-\num{8}-\num{8}-\num{8}-\num{1}) with the softplus activation~\citep{dugas2000incorporating} was used.
    The softmax cross entropy was optimized with Adam~\citep{kingma2015adam} (minibatch size: $\num{4096}$ / learning rate: $10^{-3}$ / training epochs: $\num{100}$).
    The $\ell_2$-regularization parameter is chosen from $\{10^{-2}, 10^{-4}, 10^{-6}\}$ by the five-fold cross-validation.
    The early stopping is applied with the patience of $\num{10}$ epochs.
    We randomly extracted $\num{20}$\% of the nodes as test data.
    The pairwise supervision was generated as follows:
    first extracted the edges whose both ends are in the training data as similar,
    then randomly coupled the non-connected nodes as dissimilar,
    with the same numbers of similar and dissimilar pairs.
    About \num{19000} pairs were obtained.

    \item MCL~\citep{hsu2018multi}:
    The setup of model, optimization, and data generation was the same as CIPS.

    \item DML~\citep{chopra2005learning}:
    The metric loss function proposed by \citet{chopra2005learning} was used.
    The model was the same as CIPS except the last layer,
    and \num{8}-dimensional outputs of the penultimate layer were used as the embeddings,
    on which $k$-means clustering was performed.
    Scikit-learn implementation~\citep{scikit-learn} of $k$-means clustering was used with the default parameters.
    The setup of optimization and data generation was the same as CIPS.

    \item SV (Supervised):
    Labeled \num{7889} nodes ($\pi_+ \approx \num{0.65}$) were used during training.
    The setup of model and optimization was the same as CIPS.
\end{itemize}

\paragraph{Full results.}
Table~\ref{table:full_tabular} shows the performance comparison with baseline methods on ten datasets from UCI and LIBSVM repositories.
Figure~\ref{fig:full_consistency} presents the sample complexity of our method on three image classification datasets including MNIST~\citep{lecun1998mnist}, Fashion-MNIST~\citep{xiao2017/online}, and Kuzushiji-MNIST~\citep{clanuwat2018deep},
where the original ten class categories were converted into positive/negative labels by grouping even/odd class labels.
Figure~\ref{fig:class_proportion} demonstrates the performance of our class assignment method with various class priors $\pi_+ \in \{0.1, 0.4, 0.7\}$.

\begin{table*}[ht]
\centering
\caption{
Mean clustering error and standard error on different benchmark datasets over $20$ trials.
Bold numbers indicate outperforming methods, chosen by one-sided t-test with the significance level $5\%$.
}
\label{table:full_tabular}
\scalebox{0.75}{
\begin{tabular}{cccccccccc}
\toprule
        dataset & \multirow{2}{*}{$m$ }&         \multirow{2}{*}{CIPS (Ours)} &         \multirow{2}{*}{MCL} &   \multirow{2}{*}{SD} &  \multirow{2}{*}{OVPC} & \multirow{2}{*}{SSP} & \multirow{2}{*}{CKM} & \multirow{2}{*}{KM} &  \multirow{2}{*}{(SV)}\\
        (dim., $\pi_+$) & \\
\midrule
adult & 100  &  39.8 (1.6) &  38.4 (2.1) &  30.8 (0.9) &  45.0 (0.9) &  \bf 24.7 (0.3) &  28.9 (0.8) &  \bf 24.9 (0.5) &  21.9 (0.4) \\
    (123, 0.24)       & 500  &  21.5 (1.0) & \bf 19.3 (0.4) &  23.2 (0.4) &  44.7 (0.9) &  24.3 (0.3) &  28.2 (0.4) &  27.5 (0.5) &  16.9 (0.3) \\
         & 1000 & \bf 17.6 (0.3) & \bf  17.2 (0.3) &  20.5 (0.3) &  45.5 (0.7) &  24.2 (0.3) &  27.9 (0.4) &  27.9 (0.5) &  15.9 (0.3) \\ \midrule
banana & 100  & \bf  43.6 (0.6) & \bf 44.5 (0.6) &  45.3 (0.6) &  46.0 (0.7) & \bf  43.0 (1.0) &  46.4 (0.7) &  45.8 (0.7) &  44.6 (0.6) \\
   (2, 0.45)       & 500  &  43.1 (0.8) &  43.3 (0.6) &  45.1 (0.7) &  46.0 (0.7) &  \bf 14.3 (0.7) &  45.5 (0.6) &  44.4 (0.4) &  45.1 (0.6) \\
         & 1000 &  44.4 (0.6) &  44.3 (0.7) &  44.4 (0.5) &  46.2 (0.5) & \bf 11.0 (0.2) &  45.0 (0.7) &  44.0 (0.3) &  45.1 (0.7) \\ \midrule
codrna & 100  & \bf 24.7 (1.8) &  32.3 (1.4) &  \bf  28.0 (1.3) &  32.0 (2.0) &  45.5 (1.5) &  46.7 (0.6) &  42.5 (1.0) &  11.0 (0.6) \\
   (8, 0.33)       & 500  &  \bf   6.4 (0.2) &  10.6 (0.3) &  12.0 (0.6) &  28.0 (2.1) &  48.6 (0.3) &  46.2 (0.3) &  44.0 (0.7) &   6.6 (0.2) \\
         & 1000 &  \bf   6.3 (0.2) &  \bf   6.5 (0.2) &   8.8 (0.4) &  28.3 (2.0) &  44.8 (1.6) &  46.1 (0.4) &  45.4 (0.6) &   6.3 (0.2) \\ \midrule
ijcnn1 & 100  &  16.6 (2.3) &  24.9 (2.9) &  \bf  10.7 (0.3) &  41.1 (1.1) &  31.6 (2.0) &  40.0 (1.3) &  31.9 (2.4) &   9.1 (0.2) \\
   (22, 0.10)       & 500  &   \bf  7.7 (0.2) &   8.2 (0.2) &   8.3 (0.2) &  41.6 (1.3) &  33.0 (2.5) &  45.4 (0.8) &  41.7 (0.7) &   7.9 (0.2) \\
         & 1000 &  \bf   7.7 (0.2) &  \bf   7.9 (0.2) &   \bf  8.1 (0.2) &  42.0 (1.4) &  34.9 (1.7) &  45.9 (0.8) &  43.4 (0.7) &   7.6 (0.2) \\ \midrule
magic & 100  &  \bf  24.9 (1.3) &  \bf  28.7 (1.8) &  30.7 (1.3) &  41.9 (1.0) &  47.1 (0.5) &  45.5 (1.2) &  44.0 (1.2) &  21.8 (0.4) \\
   (10, 0.35)       & 500  &  \bf  21.5 (0.3) &  \bf  21.3 (0.3) &  25.5 (0.8) &  39.6 (1.5) &  46.8 (0.5) &  46.8 (0.4) &  44.4 (0.4) &  20.8 (0.3) \\
         & 1000 &  \bf  21.3 (0.3) &  \bf  20.9 (0.3) &  23.8 (0.4) &  39.5 (1.7) &  43.6 (0.9) &  46.8 (0.3) &  44.6 (0.4) &  20.7 (0.3) \\ \midrule
phishing & 100  &  \bf  12.7 (2.3) &  \bf  12.8 (2.3) &  34.6 (1.8) &  41.7 (1.0) &  46.6 (0.5) &  24.4 (3.4) &  47.0 (0.5) &   7.6 (0.2) \\
  (44, 0.68)       & 500  &   7.2 (0.2) &  \bf   6.6 (0.1) &  26.9 (1.4) &  42.9 (0.8) &  46.0 (0.5) &  16.9 (2.6) &  46.4 (0.5) &   6.5 (0.2) \\
         & 1000 &  \bf   6.5 (0.2) &  \bf   6.3 (0.2) &  22.0 (1.0) &  43.8 (1.1) &  45.5 (0.5) &  15.2 (2.7) &  46.4 (0.5) &   6.3 (0.2) \\ \midrule
phoneme & 100  &   \bf 28.2 (1.2) &  33.1 (1.9) &  \bf  29.1 (1.2) &  38.4 (1.3) &  31.0 (1.3) &  \bf  28.0 (1.0) &  32.9 (1.2) &  25.7 (0.4) \\
    (5, 0.71)      & 500  &  \bf  25.0 (0.4) &  \bf  24.2 (0.5) &  26.1 (0.6) &  38.6 (1.9) &  25.5 (0.5) &  28.0 (0.8) &  32.7 (0.3) &  25.0 (0.3) \\
         & 1000 &  \bf  25.2 (0.4) &  \bf  25.0 (0.4) &  26.0 (0.4) &  39.8 (1.5) &  \bf  24.5 (0.5) &  30.2 (0.6) &  32.7 (0.3) &  25.3 (0.2) \\ \midrule
spambase & 100  &  \bf  13.8 (1.0) &  \bf  13.3 (1.3) &  31.6 (1.5) &  39.7 (1.3) &  40.5 (0.4) &  \bf  15.9 (2.0) &  39.7 (1.3) &  10.5 (0.3) \\
   (57, 0.39)        & 500  &   9.4 (0.2) &   \bf  8.6 (0.2) &  22.6 (0.9) &  38.0 (1.6) &  40.8 (0.3) &  11.5 (0.2) &  37.4 (2.3) &   8.5 (0.2) \\
         & 1000 &   8.3 (0.2) &  \bf   7.6 (0.1) &  19.7 (0.8) &  39.3 (1.2) &  40.2 (0.4) &  11.5 (0.2) &  39.7 (1.3) &   7.8 (0.2) \\ \midrule
w8a & 100  &  31.5 (1.9) &  31.4 (2.1) &  11.8 (0.3) &  39.7 (1.4) &  \bf   5.3 (1.2) &   \bf  6.8 (1.9) &  \bf   5.5 (1.3) &  10.3 (0.4) \\
    (300, 0.03)      & 500  &   5.6 (0.7) &   4.2 (0.5) &  \bf   3.2 (0.1) &  38.3 (1.3) &  \bf   3.5 (0.1) &  14.0 (3.1) &   5.5 (1.1) &   2.6 (0.1) \\
         & 1000 &   2.6 (0.2) &  \bf   2.2 (0.1) &   2.6 (0.2) &  43.1 (0.8) &   3.0 (0.1) &   8.9 (2.6) &   3.7 (0.5) &   2.0 (0.1) \\ \midrule
waveform & 100  &  \bf  18.2 (0.3) &  \bf  17.7 (0.3) &  26.4 (0.9) &  41.9 (1.6) &  44.1 (0.6) &  41.0 (1.3) &  45.1 (0.6) &  16.2 (0.2) \\
   (21, 0.33)      & 500  &  15.8 (0.2) &  \bf  15.1 (0.2) &  20.2 (0.5) &  38.9 (1.3) &  44.9 (0.7) &  45.1 (0.6) &  47.1 (0.4) &  14.8 (0.2) \\
         & 1000 &  \bf  14.9 (0.2) &  \bf  14.7 (0.2) &  18.4 (0.3) &  37.0 (1.7) &  45.5 (0.5) &  44.9 (0.5) &  47.8 (0.4) &  14.4 (0.2) \\
\bottomrule
\end{tabular}
}
\end{table*}

\begin{figure}[ht]
  \centering
  \subfigure[MNIST]{\includegraphics[height=4cm]{contents/img/consistency/consistency_mnist.pdf}}
  \subfigure[Fashion-MNIST]{\includegraphics[height=4cm]{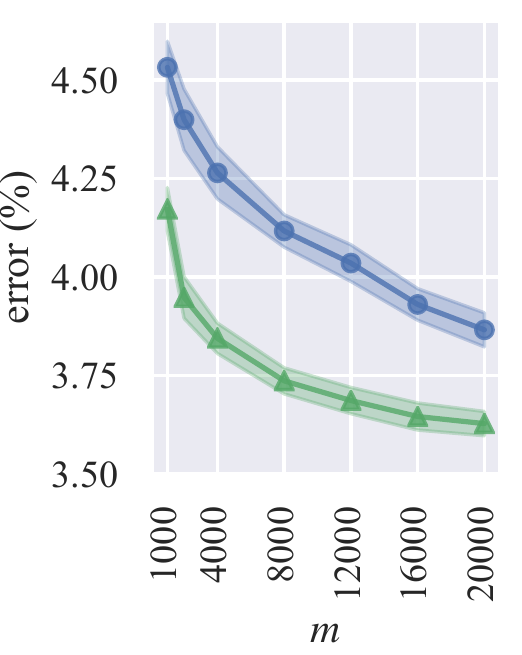}}
  \subfigure[Kuzushiji-MNIST]{\includegraphics[height=4cm]{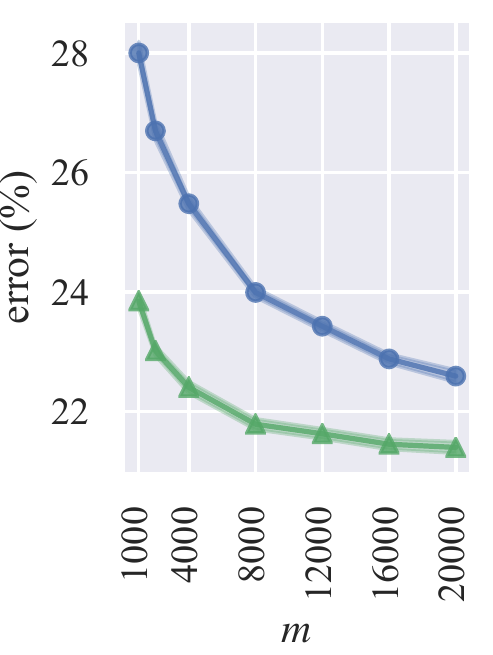}}
  \caption{
    Mean clustering error and standard error (shaded areas) over $20$ trials on image classification datasets.
  }
  \label{fig:full_consistency}
\end{figure}

\begin{figure}[ht]
  \centering
  \subfigure[MNIST]{\includegraphics[width=0.25\linewidth]{contents/img/priors/prior_mnist.pdf}}
  \subfigure[Fashion-MNIST]{\includegraphics[width=0.25\linewidth]{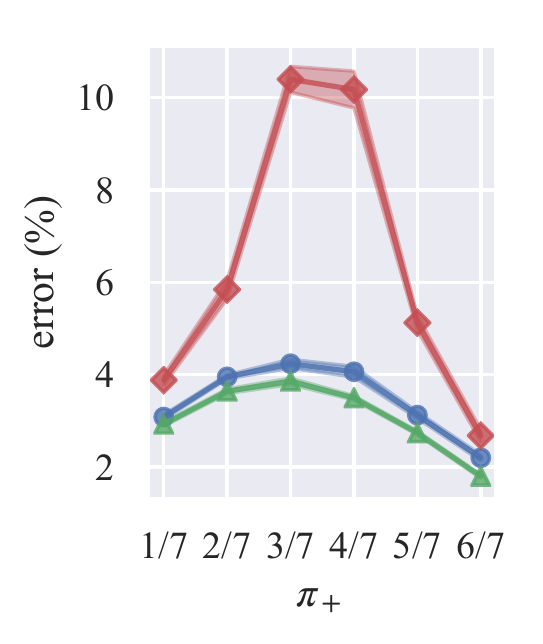}}
  \subfigure[Kuzushiji-MNIST]{\includegraphics[width=0.25\linewidth]{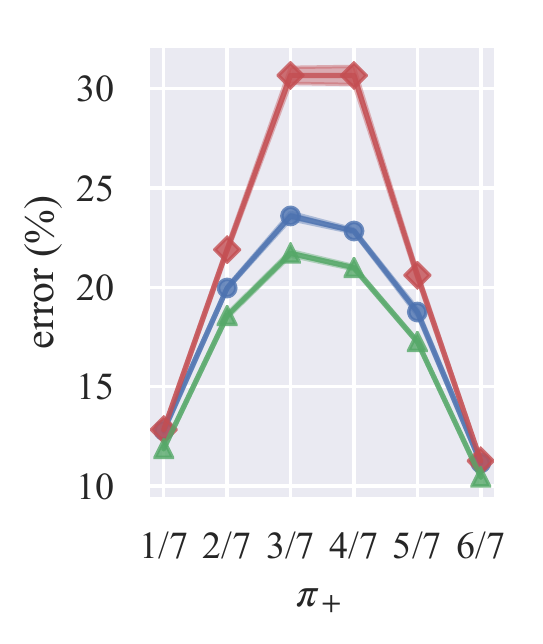}}
  \caption{
    Mean clustering error and standard error (shaded areas) over ten trials on image classification datasets under controlled class priors.
  }
  \label{fig:class_proportion}
\end{figure}

\begin{figure}[ht]
  \centering
  \subfigure[$\pi_+ =0.1$]{\includegraphics[width=0.3\linewidth]{contents/img/class_assignment/class_assinment_01.pdf}}
  \subfigure[$\pi_+ =0.4$]{\includegraphics[width=0.3\linewidth]{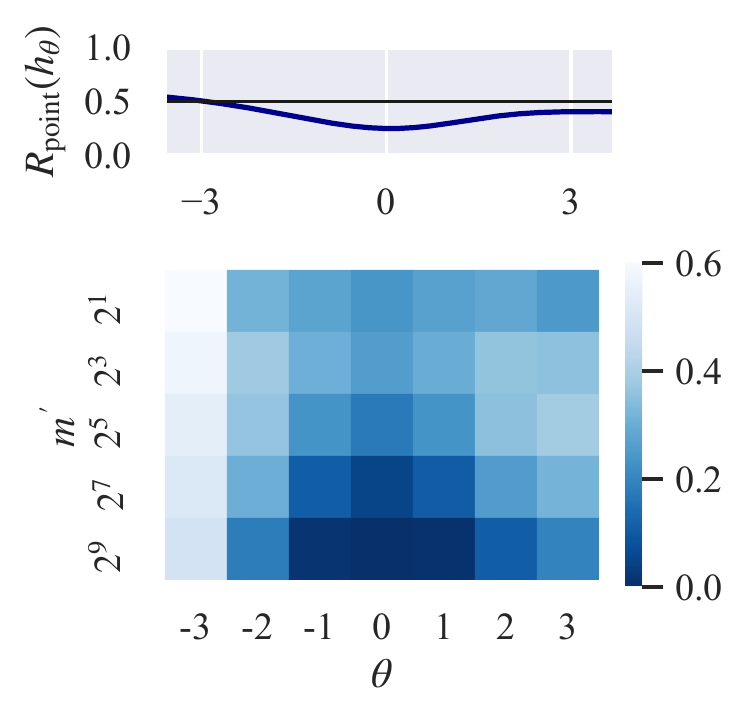}}
  \subfigure[$\pi_+ =0.7$]{\includegraphics[width=0.3\linewidth]{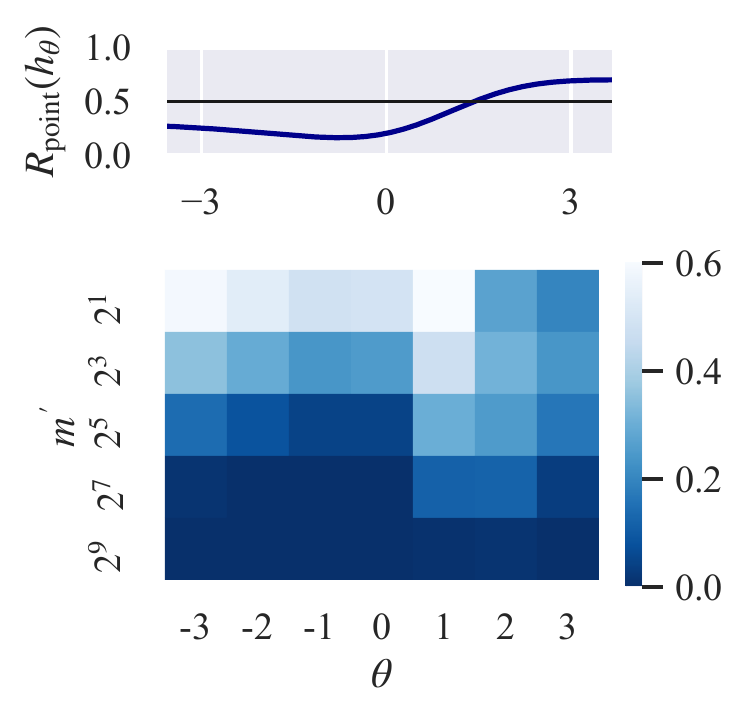}}
  \caption{
    Classification error for each threshold classifier (upper) and
    the error probability of the proposed class assignment method over $\num{10000}$ trials (bottom) on the synthetic Gaussian dataset  with $\pi_+ \in \{0.1, 0.4, 0.7 \}$.
    The detail of the dataset is described in Section~\ref{sec:experiments}.
  }
  \label{fig:full_class_assignment}
\end{figure}

\end{document}